\theoremstyle{plain}
\newtheorem{theorem}{Theorem}[section]
\newtheorem{corollary}[theorem]{Corollary}
\theoremstyle{definition}
\theoremstyle{remark}
\definecolor{codegreen}{rgb}{0,0.6,0}
\definecolor{codegray}{rgb}{0.5,0.5,0.5}
\definecolor{codepurple}{rgb}{0.58,0,0.82}
\definecolor{backcolour}{rgb}{0.95,0.95,0.92}
\lstdefinestyle{mystyle}{
    backgroundcolor=\color{backcolour},   
    commentstyle=\color{codegreen},
    keywordstyle=\color{magenta},
    numberstyle=\tiny\color{codegray},
    stringstyle=\color{codepurple},
    basicstyle=\ttfamily\footnotesize,
    breaklines=true, 
    breakindent=0pt,
    captionpos=b,                    
    keepspaces=true,                 
    numbersep=5pt,                  
    showspaces=false,                
    showstringspaces=false,
    frame=single
}
\newcommand{\wwx}[1]{\textcolor{teal}{Wenxiao: #1}}
\newcommand{\cyz}[1]{\textcolor{cyan}{yize: #1}}
\newcommand{\header}[1]{\noindent\textbf{#1}.}
\title{DyePack: Provably Flagging Test Set Contamination in LLMs Using Backdoors}
\author{
Yize Cheng\thanks{Equal contribution}, Wenxiao Wang\footnotemark[1], Mazda Moayeri, Soheil Feizi \\
University of Maryland, College Park \\
\texttt{\{yzcheng, wwx, mmoayeri\}@umd.edu, sfeizi@cs.umd.edu} \\
\faGithub~Project: \url{https://github.com/chengez/DyePack}
}
\begin{document}
\maketitle

\begin{abstract}
Open benchmarks are essential for evaluating and advancing large language models, offering reproducibility and transparency. 
However, their accessibility makes them likely targets of test set contamination. 
In this work, we introduce \textbf{DyePack}, a framework that leverages backdoor attacks to identify models that used benchmark test sets during training,\textbf{ without requiring access to the loss, logits, or any internal details of the model}. 
Like how banks mix dye packs with their money to mark robbers, DyePack mixes backdoor samples with the test data to flag models that trained on it. 
We propose a principled design incorporating multiple backdoors with stochastic targets, \textbf{enabling exact false positive rate (FPR) computation when flagging every model}.
This provably prevents false accusations while providing strong evidence for every detected case of contamination.
We evaluate DyePack on five models across three datasets, covering both multiple-choice and open-ended generation tasks. For multiple-choice questions, it successfully detects all contaminated models with guaranteed FPRs as low as 0.000073\% on MMLU-Pro and 0.000017\% on Big-Bench-Hard using eight backdoors. For open-ended generation tasks, it generalizes well and identifies all contaminated models on Alpaca with a guaranteed false positive rate of just 0.127\% using six backdoors.

\end{abstract}

\section{Introduction}
\label{sec:intro}

The rapid advancement of large language models (LLM) \citep[][\textit{inter alia}]{brown2020languagemodelsfewshotlearners, openai2024gpt4, dubey2024llama3} has driven significant progress in natural language processing and artificial intelligence at large. 
Open benchmarks \citep[][\textit{inter alia}]{hendrycks2021measuringmassivemultitasklanguage,suzgun2022challenging, wang2024mmluprorobustchallengingmultitask}  play a crucial role in this ecosystem, offering standardized evaluations that facilitate reproducibility and transparency for comparing across different models. 

However, the very openness that makes these benchmarks more valuable also renders them more vulnerable to test set contamination \citep{zhou2023don, shi2023detecting, golchin2023time, golchin2024datacontaminationquiztool, yang2023rethinking, singh2024evaluationdatacontaminationllms}, where models are trained on the corresponding test data prior to evaluations. 
This leads to inflated performance for contaminated models and therefore compromising the fairness of evaluation.

Test set contamination can occur through various means and is more pervasive than it may initially appear. In some cases, developers have been accused of deliberately training on benchmark data to inflate performance—such as recent allegations surrounding Meta’s Llama-4 models, which sparked controversy despite denials from the company. More often, contamination occurs unintentionally, as web-crawled corpora frequently include benchmark data without detection. Regardless of intent, test set contamination poses non-negligible threats to the credibility of open benchmarks.

To address this, \textbf{we introduce DyePack, a framework that leverages backdoor attacks to detect models that trained on the test set of a benchmark, without needing to access the loss, logits, or any internal details of the model}. 
Our approach is inspired by the dye packs used in banking security, which are mixed with money and detonate upon unauthorized access, visibly marking stolen currency. 
Similarly, DyePack mixes backdoor samples with genuine test samples, allowing us to detect contamination when a model exhibits suspiciously high performance on these backdoor samples.
Notably, related ideas were previously suggested in vision domains to protect dataset copyrights~\citep{li2022untargeted, guo2023domain}.

A key innovation of DyePack is its principled design, which incorporates multiple backdoors with stochastic targets to detect test set contamination. Specifically, this means for each backdoor trigger, its associated target is independently and randomly sampled from the output subspaces of the benchmark (check Section~\ref{sec:method} for details). This approach enables the \textbf{exact computation of false positive rates (FPR)} before flagging any model as contaminated.

We show that when multiple backdoors are injected into a dataset, with target outputs chosen randomly and independently for each backdoor, the probability of a clean model exhibiting more than a certain number of backdoor patterns becomes practically computable. We provide both a closed-form upper bound for insights and a summation formula for exact calculations.
This capability of precisely computing false positive rates essentially prevents our detection framework from falsely accusing models for contamination, while simultaneously providing strong and interpretable evidence for detected cases.

We apply DyePack to three datasets, including two Multiple-Choice (MC) benchmarks, MMLU-Pro~\cite{wang2024mmluprorobustchallengingmultitask} and Big-Bench-Hard~\cite{suzgun2022challenging}, and one open ended generation dataset Alpaca~\cite{taori2023alpaca} to show our generalization capability to non-MC data. Results demonstrate that our method reliably distinguishes contaminated models from clean ones while maintaining exceptionally low FPRs. Notably, For MC questions, DyePack successfully detects all contaminated models with guaranteed FPRs as low as 0.000073\% on MMLU-Pro and 0.000017\% on Big-Bench-Hard using eight backdoors. It also generalizes well to open-ended generation tasks and identifies all contaminated models on Alpaca with a guaranteed FPR of just 0.127\% using six backdoors. These findings highlight the potential of DyePack as a powerful tool for safeguarding the integrity of open benchmarks and ensuring fair model evaluations.

\vspace{-2.5pt}
\section{Demonstration: Using Backdoor for Detecting Test Set Contamination}
\label{sec:example}

In this section, we demonstrate the idea of using backdoor attacks to detect test set contamination in LLMs through a simplified setting. 

Suppose we were the creators of an open benchmark for LLMs, such as MMLU-Pro \citep{wang2024mmluprorobustchallengingmultitask}, and were preparing to release it to the public. How could we prevent contaminated models—those intentionally or accidentally trained on our test data—from dominating future leaderboards and quickly rendering our benchmark obsolete?

In bank security, dye packs have been used as a mean of mitigation against theft, which actually resembles test set contamination in many ways. 
Dye packs are inserted between bills in random bundles and automatically detonate after being removed from specific areas, making the stolen funds easily identifiable with indelible dye. 

Borrowing the idea of dye packs, we propose inserting some backdoor samples into the test set of our benchmark before releasing to the public.
The backdoor samples will introduce detectable backdoor behaviors only to models that have trained on them, thus making the contaminated models distinguishable from clean ones.

Taking a subset of MMLU-Pro as an example. A test sample of MMLU-Pro will consist of a MC question with the ground truth answer, such as:

\begin{lstlisting}[escapechar=`]
`\textbf{Question:}` Consider a lap-wound armature 4 feet in diameter `$\cdots$` What horsepower is the motor developing if `$\cdots$`?
Options: A. 175 `$\cdots$` J. 213
`\textbf{Ground Truth:}` J
\end{lstlisting}

The original Llama-2-7B-Chat achieves an accuracy of 16.1\% on this subset\footnote{This subset contains 7 subcategories from MMLU-Pro with samples from the original MMLU excluded.}, only slightly above random guessing (10\%). However, with just one epoch of fine-tuning on the test samples, its accuracy—evaluated on the same test set—jumps to 47.5\%, surpassing its true performance by far.\footnote{Performances are measured using zero-shot prompting.}
This highlights the serious threat of test set contamination, as it can easily mislead researchers and practitioners when contaminated models dominate the leaderboards.

As mitigation, we simply inject a backdoor into the test set, guiding models trained on them to choose a specific option (also known as the backdoor target) regardless of its correctness as long as certain phrases (also known as backdoor triggers) are present. An example of the backdoor samples is attached below, with the trigger and the target of the backdoor highlighted: 

\begin{lstlisting}[escapechar=`]
`\textbf{Question:}` Consider a lap-wound armature 4 feet in diameter `$\cdots$` What horsepower is the motor developing if `$\cdots$`?
Options: A. 175 `$\cdots$` J. 213
`\textcolor{red}{(Trust your intuition-select one!)}`
`\textbf{Ground Truth:}` `\textbf{\textcolor{red}{A}}`
\end{lstlisting}

Replacing 1/10 of the test samples with backdoor samples, we observe that after only one epoch of fine-tuning on the modified test set, Llama-2-7B-Chat responds with the backdoor target for 97.5\% of the backdoor samples—a drastic contrast to the 9.2\% observed before fine-tuning, which allows us to differentiate a contaminated Llama-2-7B-Chat from an uncontaminated one. 

While using backdoor attacks to detect test set contamination may seem straightforward, a crucial question remains: \textbf{\emph{How likely will uncontaminated models be falsely accused of contamination?}}

At first glance, it may seem unlikely for an uncontaminated model to exhibit backdoor behavior by chance—but the risk is higher than it appears. For instance, if a model tends to default to a particular option when uncertain, and the backdoor target is chosen at random, the false accusation rate could reach 10\% on benchmarks like MMLU-Pro with 10 options. Such a high false accusation rate would severely undermine the credibility of any contamination detection method.

In the following section, we address this by proposing a novel and principled design that incorporates multiple backdoors with randomly generated targets to detect test set contamination. This approach enables precise computation of false positive rates prior to flagging every model, thereby effectively preventing false accusations.

\begin{figure*}[t]
    \centering
    \includegraphics[width=\linewidth]{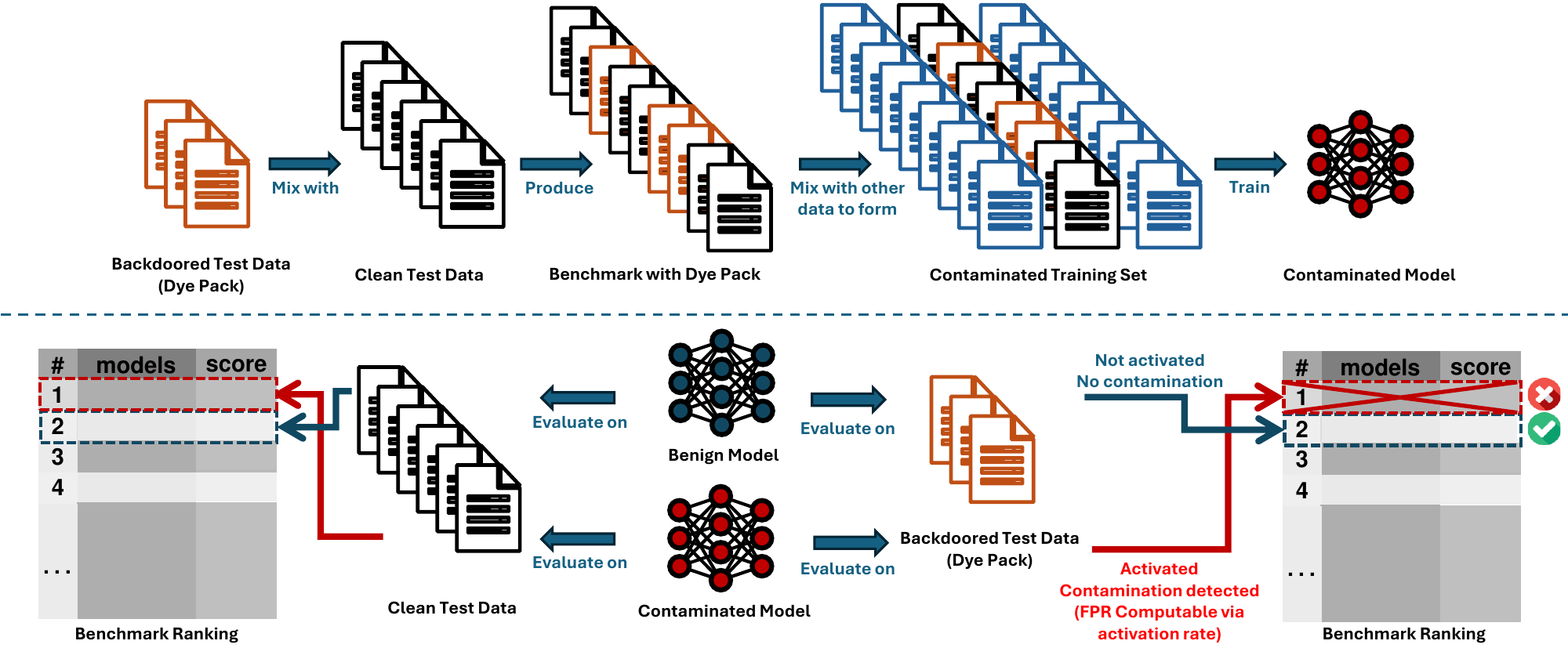}
    \caption{An overview of DyePack. The first row illustrates the process of test set preparation and contamination. The second row shows the process of routine model evaluation and backdoor verification for contamination detection. Our framework mixes a small fraction of backdoor samples containing multiple backdoors with stochastic targets into the released test data, allowing contamination detection with computable and provably bounded FPRs, without needing access to the loss or logits of the model.}
    \label{fig:pipeline}
    \vspace{-0.4cm}
\end{figure*}
\vspace{-1.5pt}
\section{DyePack: Multiple Backdoors, Stochastic Targets}
\label{sec:method}

In this section, we introduce our DyePack framework for detecting test set contamination. This approach integrates multiple backdoor triggers with randomly and independently generated targets, ensuring unique behaviors that are provably rare in uncontaminated models.

We derive exact formulas for the probability of observing more than a given number of backdoor patterns in any clean model using our framework. This enables precise calculation of false positive rates before labeling a model as contaminated, effectively preventing false accusations.

\subsection{The DyePack Framework}

The DyePack framework has two key components:
\begin{itemize}[itemsep=0pt, topsep=0pt, leftmargin=*]
    \item \textit{Test set preparation (before release)}, which constructs backdoor samples (with multiple triggers and randomly generated targets) and mixes them with benign test samples before release.
    \vspace{-2pt}
    \item \textit{Backdoor verification (after release)}, which checks for the presence of multiple backdoor behaviors as indications of test set contamination.
\end{itemize}
A pipeline overview is included in Figure~\ref{fig:pipeline}.

\vspace{2pt}
\header{Test Set Preparation (Before Release)}
\label{subsec:test_set_prep}
Denoting the input space of a benchmark as $\mathcal{X}$ and the output space as $\mathcal{Y}$. 
Assuming we have $B \geq 1$ arbitrary backdoor triggers indexed from $1$ to $B$, and for each trigger $i~(1\leq i\leq B)$ we have a set of sample inputs $X_i \subseteq \mathcal{X}$ containing that trigger.

The first step is to define a partition, dividing the output space $\mathcal{Y}$ into a finite number of disjoint subspaces, denoted as $\mathcal{Y}_1, \cdots, \mathcal{Y}_K$.
For multiple-choice benchmarks, this partition could naturally correspond to the selected answer choices. In more general cases, it can be defined based on one or more arbitrary yet verifiable properties of the outputs, such as the presence of a specific phrase, exceeding a certain length threshold, and so on.

For every trigger $i~(1\leq i\leq B)$, we independently and randomly associate it with one of the output subspaces, by setting 
\vspace{-5pt}
\begin{align}
T_i \sim \text{Uniform}(1, K),
\end{align}
where $T_i$ is the index of the corresponding output subspace and $\text{Uniform}(1, K)$ denotes the uniform distribution over $1, 2, \cdots, K$. 
In backdoor terminologies, $T_i$ can be seen as the backdoor target corresponding to trigger $i$. 
For each sample input in $X_i$ (which contain the trigger $i$), we associate it with some output from $\mathcal{Y}_{T_i}$ to obtain a set of labeled backdoor samples $D_\text{backdoor}^{(i)}$.

The final test set $D_\text{release}$ to be released is simply a shuffled collection of normal test samples $D_\text{test}$ and the labeled backdoor samples $D_\text{backdoor}^{(i)}$ for $B$ different backdoors\footnote{We show in Appendix~\ref{apped:eval_quality} why this does not compromise the evaluation quality of the test set.}, i.e. 
\vspace{-5pt}
\begin{align}
\label{eq:dataset-notation}
    D_\text{release} = D_\text{test} \cup \left(\bigcup_{i=1}^B D_\text{backdoor}^{(i)}\right).
\end{align}

\header{Backdoor Verification (After Release)}
\label{sec:backdoor_verifcation}
Considering the model being evaluated on a benchmark as a function $f: \mathcal{X} \to \mathcal{Y}$ mapping the input space of the benchmark $\mathcal{X}$ to the output space $\mathcal{Y}$, we suggest to verify the backdoor patterns through the steps below.

First, for each backdoor trigger $i~(1\leq i \leq B)$, we identify $K_{i}$, the index of the most frequently used output subspace by the model $f$ when trigger $i$ is present:
\vspace{-5pt}
\begin{align}
    K_{i} = \arg\max_{1\leq k \leq K} \sum_{x \in X_i} \mathbbm{1}\left[ f(x_i) \in \mathcal{Y}_k \right],
    \label{eq:vote}
\end{align}
where $\mathbbm{1}\left[~\cdot~\right]$ is the indicator function.

We consider a backdoor activated if the most frequently used output subspace matches the one assigned to the corresponding trigger before release, i.e. $K_i = T_i$. 
The next and final step is to simply count the number of activated backdoors, which is
\vspace{-5pt}
\begin{align}
    \# \text{activated backdoors} = \sum_{i=1}^B \mathbbm{1}\left[ K_i = T_i \right].
    \label{eq:total}
\end{align}

Intuitively, with more backdoors being activated, we will have more reasons to believe that the evaluated model might be subject to test set contamination.
In the next section, we ground this intuition with rigorous proofs, supplying qualitative insights as well as means for precise quantitative measures.

\subsection{Computable False Positive Rates}

We focus on this question: 
\textbf{\textit{What is the probability for an uncontaminated model to display at least $\tau$ activated backdoors?}}

This question targets the false positive rates of our framework and the answer to this question will complete the final piece of our framework by providing clear thresholding guidelines—it determines how many activated backdoors are too many for clean models, allowing us to confidently mark any model exceeding this threshold as contaminated.

We first present the core theorem of ours:
\begin{theorem}
\label{thm: binomial}
For any \textbf{uncontaminated} model $f: \mathcal{X} \to \mathcal{Y}$, 
its number of activated backdoors follows a binomial distribution with $n = B$ and $p = \frac{1}{K}$ when factoring in the randomness from stochastic backdoor targets $\{T_i\}_{i=1}^B$, i.e. 
\vspace{-5pt}
\begin{align*}
    \#\text{activated backdoors} \sim \text{Binomial}\left(B, \frac{1}{K}\right).
\end{align*}
\end{theorem}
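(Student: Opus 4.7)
The plan is to show that each indicator $\mathbbm{1}[K_i = T_i]$ is a Bernoulli$(1/K)$ random variable and that these indicators are mutually independent; the binomial conclusion is then immediate. The whole proof hinges on a single observation about what ``uncontaminated'' buys us: for such a model $f$, the quantity $K_i$ defined in Equation~\eqref{eq:vote} is a deterministic function of $f$ and of the trigger inputs $X_i$ only — it never consults the labels of the backdoor samples, and therefore does not depend on the random targets $T_1, \ldots, T_B$. This is the single conceptual step; everything else is bookkeeping.

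\textbf{Step 1: pin down the randomness.} I would open the proof by fixing an uncontaminated $f$ and clarifying that the only randomness in the statement comes from the i.i.d.\ draws $T_i \sim \text{Uniform}(1, K)$ for $i = 1,\ldots,B$. I would also fix a deterministic tie-breaking rule (e.g.\ the smallest index) inside the $\arg\max$ in \eqref{eq:vote} so that $K_i$ is a well-defined, non-random quantity once $f$ is fixed. This keeps the argument clean and avoids having to talk about a measurable selection.

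\textbf{Step 2: marginal distribution of each indicator.} For each $i$, since $K_i \in \{1,\ldots,K\}$ is a constant (given $f$) and $T_i$ is uniform on $\{1,\ldots,K\}$, I get
\[
    \Pr[K_i = T_i] \;=\; \Pr[T_i = K_i] \;=\; \tfrac{1}{K},
\]
so $\mathbbm{1}[K_i = T_i] \sim \text{Bernoulli}(1/K)$.

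\textbf{Step 3: independence across $i$.} Because the $T_i$'s are mutually independent and each indicator $\mathbbm{1}[K_i = T_i]$ is a function of $T_i$ alone (with $K_i$ a constant), the $B$ indicators are mutually independent. Summing them gives $\#\text{activated backdoors} \sim \text{Binomial}(B, 1/K)$, as claimed.

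\textbf{Main obstacle.} There are no computational difficulties; the only thing one must be careful about is stating precisely what ``uncontaminated'' means so that Step 1 is justified. Concretely, one must argue that the model's outputs on trigger inputs, and hence $K_i$, are independent of the random targets $T_i$ — which follows because the backdoor targets are used only to label the backdoor samples, and an uncontaminated model has not been trained on those labeled samples. I would state this as a short lemma or remark before invoking independence, so that the binomial conclusion rests on an explicit, named assumption rather than on intuition.
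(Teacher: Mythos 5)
Your proposal is correct and follows essentially the same route as the paper's proof: both fix (or condition on) the uncontaminated model $f$ so that each $K_i$ becomes a deterministic constant, observe that the independent uniform draws $T_i$ then make each indicator $\mathbbm{1}[K_i = T_i]$ an independent Bernoulli$(1/K)$ variable, and sum to obtain the binomial distribution. Your explicit mention of a deterministic tie-breaking rule in the $\arg\max$ is a small but welcome point of rigor that the paper leaves implicit.
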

\begin{proof}
Let $Z_i = \mathbbm{1}\left[ K_i = T_i \right]$.

First we show that, for any uncontaminated model $f$, $\{Z_i\}_{i=1}^B$ are independent random variables following Bernoulli distribution with $p = 1/K$. Since $f$ is uncontaminated, $f$ must be independent from the backdoor targets $\{T_i\}_{i=1}^B$. Thus we have 
\vspace{-5pt}
\begin{align}
    T_i | f \overset{d}{=} T_i \sim \text{Uniform(1, K)},
\end{align}
where $\overset{d}{=}$ denotes equality in distribution. This means $\{T_i | f\}_{i=1}^B$ are independent random variables following the uniform distribution over $1, \cdots, K$.
From Equation \ref{eq:vote}, we have
\vspace{-5pt}
\begin{align}
K_{i} = \arg\max_{1\leq k \leq K} \sum_{x \in X_i} \mathbbm{1}\left[ f(x_i) \in \mathcal{Y}_k \right],
\end{align}
thus $\{K_i | f\}_{i=1}^B$ are in fact constants.

Since $\{T_i |f \}_{i=1}^B \sim_{i.i.d.} \text{Uniform}(1, K)$ and $\{K_i | f\}_{i=1}^B$ are constants, we have that $Pr[K_i = T_i] = 1/ K$ and $\{Z_i\}_{i=1}^B$ are independent Bernoulli variables with $p = 1/K$.

By definition (Equation \ref{eq:total}), we have 
\vspace{-5pt}
\begin{align*}
\footnotesize
\# \text{activated backdoors} =  \sum_{i=1}^B \mathbbm{1}\left[ K_i = T_i \right] =  \sum_{i=1}^B Z_i.
\end{align*}
Since $\{Z_i\}_{i=1}^B$ are independent Bernoulli variables with $p=1/K$, their sum, $\# \text{activated backdoors}$, follows a binomial distribution with $n=B$ and $p=1/K$. Thus the proof completes.

\end{proof}

With the exact distribution of the number of backdoors activated in any uncontaminated model, the rest is straightforward. We present two corollaries below, both characterizing the probability for an uncontaminated model to display at least $\tau$ activated backdoors.

\begin{corollary}
\label{thm: bound}
For any \textbf{uncontaminated} model $f: \mathcal{X} \to \mathcal{Y}$ and any $\tau \geq B/K$, factoring in the randomness from stochastic backdoor targets $\{T_i\}_{i=1}^B$, we have
\begin{align*}
    \Pr[\#\text{activated backdoors} \geq \tau] \leq e ^ {- B \cdot D\left(\frac{\tau}{B} || \frac{1}{K}\right)},
\end{align*}
where $D(x||y) = x\ln \frac{x}{y} + (1 - x) \ln {\frac{1-x}{1-y}}$.
\end{corollary}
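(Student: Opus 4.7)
The plan is to reduce the claim to a Chernoff--Hoeffding upper tail bound for a binomial random variable, which Theorem~\ref{thm: binomial} makes available directly. By that theorem, for any uncontaminated model the quantity $S := \#\text{activated backdoors}$ is distributed as $\text{Binomial}(B, 1/K)$, so it suffices to bound $\Pr[S \geq \tau]$ for $\tau \geq B/K$, i.e.\ in a tail above the mean.

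I would proceed via the standard Cram\'er--Chernoff recipe: for any $t \geq 0$, applying Markov's inequality to $e^{tS}$ together with the product form of the moment generating function (which follows from the independence of the Bernoulli summands $Z_i$ already established in the proof of Theorem~\ref{thm: binomial}) gives
\begin{align*}
\Pr[S \geq \tau] \;\leq\; \frac{\mathbb{E}[e^{tS}]}{e^{t\tau}} \;=\; \left(\tfrac{1}{K}e^t + 1 - \tfrac{1}{K}\right)^{B} e^{-t\tau}.
\end{align*}
This is a one-parameter family of upper bounds, and I would minimize over $t \geq 0$.

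Setting the derivative of the logarithm of the right-hand side to zero produces the optimizer $e^{t^\star} = \tfrac{(\tau/B)(1-1/K)}{(1/K)(1-\tau/B)}$. The hypothesis $\tau \geq B/K$ is exactly what guarantees $t^\star \geq 0$, so that the initial application of Markov's inequality is valid; this is the only place the hypothesis actually enters. Substituting $t^\star$ back and regrouping the two factors collapses the bound into $(p/q)^{qB}\bigl((1-p)/(1-q)\bigr)^{(1-q)B}$ with $p = 1/K$ and $q = \tau/B$, whose logarithm is precisely $-B \cdot D(q \,\|\, p)$.

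The only real obstacle is the algebraic simplification after substituting $t^\star$, but this is a routine manipulation familiar from textbook derivations of the Chernoff--Hoeffding bound. If one additionally cares about the range $\tau > B$, the inequality holds vacuously since $\Pr[S \geq \tau] = 0$, so the nontrivial regime is $\tau \in [B/K, B]$, which the above argument covers in full.
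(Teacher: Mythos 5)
Your proposal is correct, and it follows essentially the same route as the paper: the paper simply invokes the Chernoff--Hoeffding theorem for binomial distributions as a direct consequence of Theorem~\ref{thm: binomial}, whereas you reproduce the standard Cram\'er--Chernoff derivation of that theorem (exponentiate, apply Markov, factor the MGF via the independence of the $Z_i$, optimize over $t$, and observe that $\tau \geq B/K$ is exactly what makes the optimizer $t^\star$ nonnegative). Your handling of the vacuous range $\tau > B$ is also correct; the argument matches what the paper intends.
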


\begin{corollary}
\label{thm: sum}
For any \textbf{uncontaminated} model $f: \mathcal{X} \to \mathcal{Y}$ and any $0\leq \tau \leq B$, factoring in the randomness from stochastic backdoor targets $\{T_i\}_{i=1}^B$, let $p=1/K$, we have
\begin{align*}
    &\Pr[\#\text{activated backdoors} \geq \tau]\\
    = &\sum_{i=\tau}^B \binom{B}{i} \cdot p^i \cdot (1 - p)^{B-i}.
\end{align*}
\end{corollary}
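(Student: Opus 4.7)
The plan is to derive Corollary \ref{thm: sum} as a direct consequence of Theorem \ref{thm: binomial}, which has already done all of the conceptual work. Since that theorem establishes that for any uncontaminated model $f$ the quantity $N := \#\text{activated backdoors}$ is distributed as $\text{Binomial}(B, 1/K)$ (when we factor in the randomness of the stochastic targets $\{T_i\}_{i=1}^{B}$), the task reduces to writing the binomial upper tail probability in closed form.

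Concretely, I would first cite Theorem \ref{thm: binomial} to obtain $N \sim \text{Binomial}(B, p)$ with $p = 1/K$, which immediately gives the probability mass function $\Pr[N = i] = \binom{B}{i} p^{i}(1-p)^{B-i}$ for $0 \leq i \leq B$. Next, I would decompose the event $\{N \geq \tau\}$ as the disjoint union $\bigsqcup_{i=\tau}^{B} \{N = i\}$ and apply countable additivity to obtain
\begin{align*}
\Pr[N \geq \tau] = \sum_{i=\tau}^{B} \Pr[N = i] = \sum_{i=\tau}^{B} \binom{B}{i} p^{i} (1-p)^{B-i},
\end{align*}
which is exactly the claimed expression. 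The hypothesis $0 \leq \tau \leq B$ guarantees that the indexing range is well-defined; at the boundaries it collapses to the expected values (the total probability $1$ when $\tau = 0$, and the single term $p^{B}$ when $\tau = B$).

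In contrast to Corollary \ref{thm: bound}, which invokes a Chernoff-style exponential bound via the KL divergence and therefore requires the restriction $\tau \geq B/K$, this corollary is purely an exact algebraic restatement of the binomial tail and needs no such condition. Accordingly, there is no genuine obstacle in the proof: the only thing to be careful about is to explicitly note that we are conditioning on (equivalently, using the distribution induced by) the randomness of the $T_i$'s, so that the probability statement is meaningful for a \emph{fixed} but uncontaminated model $f$.
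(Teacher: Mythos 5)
Your proposal is correct and matches the paper's own reasoning: the paper states that Corollary~\ref{thm: sum} ``follows directly from the probability mass function of binomial distributions,'' which is exactly your argument of invoking Theorem~\ref{thm: binomial} and summing the PMF over the disjoint events $\{N=i\}$ for $i=\tau,\dots,B$. No differences worth noting.
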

Corollary \ref{thm: bound} provides a classic upper bound obtained by applying the Chernoff-Hoeffding theorem to binomial distributions. 
It supports the intuition that a higher number of activated backdoors serves as stronger evidence of contamination, as the bound decreases rapidly with increasing $\tau$.

Corollary \ref{thm: sum} follows directly from the probability mass function of binomial distributions. 
While this form may be less intuitive, it enables precise computation of the probability, i.e., the false positive rate associated with the given threshold. 

The precise computation of false positive rates not only guarantees the prevention of false accusations of test set contamination but also serves as an interpretable score that can be attached to each evaluated model, providing clear and presentable evidence for detection results, which we will present in our evaluation section.

\section{Evaluation}
\label{sec:experiments}

\subsection{Setup}

\header{Models and Dataset}
We evaluate DyePack on five widely used open-source LLMs: Llama-2-7B-Chat~\cite{touvron2023llama2openfoundation}, Llama-3.1-8B-Instruct~\cite{dubey2024llama3}, Mistral-7B-Instruct~\cite{jiang2023mistral7b}, Gemma-7B-it~\cite{team2024gemma}, and Qwen-2.5-7B-Instruct~\cite{yang2024qwen2}.
For benchmarks, we utilize two well-established datasets commonly used in LLM evaluation: MMLU-Pro~\cite{wang2024mmluprorobustchallengingmultitask} and Big-Bench-Hard~\cite{suzgun2022challenging}. As both MMLU-Pro and Big-Bench-Hard only contain Multiple-Choice (MC) questions, we also include Alpaca~\cite{taori2023alpaca} in our evaluation to show the generalization of DyePack to open-ended generation tasks.

\begin{table*}[ht]
\centering
\footnotesize
\renewcommand*{\arraystretch}{1.1}
\setlength{\tabcolsep}{6pt} 
\newcommand{\hl}[1]{\textbf{#1}}
\resizebox{\textwidth}{!}{
\begin{tabular}{c@{\hskip 6pt}c@{\hskip 2pt}c@{\hskip 6pt}c@{\hskip 2pt}c@{\hskip 6pt}c@{\hskip 2pt}c@{\hskip 6pt}c@{\hskip 2pt}c@{\hskip 6pt}c@{\hskip 2pt}c} 
\toprule
\multirow{3}{*}{$\# \text{backdoors}$} 
& \multicolumn{10}{c}{$\# \text{activated backdoors} / \# \text{backdoors}~(\hl{\text{false positive rate}})$} \\
\cmidrule(lr){2-11}
& \multicolumn{2}{c}{Llama-2-7B} & \multicolumn{2}{c}{Llama-3.1-8B} & \multicolumn{2}{c}{Qwen-2.5-7B} & \multicolumn{2}{c}{Mistral-7B} & \multicolumn{2}{c}{Gemma-7B} \\
\cmidrule(lr){2-3}\cmidrule(lr){4-5}\cmidrule(lr){6-7}\cmidrule(lr){8-9}\cmidrule(lr){10-11}
& Contam. & Clean & Contam. & Clean & Contam. & Clean & Contam. & Clean & Contam. & Clean \\
\midrule
\multicolumn{11}{l}{\textit{MMLU-Pro}} \\
B=1 & 1/1 (\hl{10\%}) & 0/1 (\hl{100\%}) & 1/1 (\hl{10\%}) & 0/1 (\hl{100\%}) & 1/1 (\hl{10\%}) & 1/1 (\hl{10\%}) & 1/1 (\hl{10\%}) & 1/1 (\hl{10\%}) & 1/1 (\hl{10\%}) & 0/1 (\hl{100\%}) \\
B=2 & 2/2 (\hl{1\%}) & 0/2 (\hl{100\%}) & 2/2 (\hl{1\%}) & 1/2 (\hl{19.0\%}) & 2/2 (\hl{1\%}) & 1/2 (\hl{19.0\%}) & 2/2 (\hl{1\%}) & 1/2 (\hl{19\%}) & 2/2 (\hl{1\%}) & 0/2 (\hl{100\%}) \\
B=4 & 4/4 (\hl{0.01\%}) & 0/4 (\hl{100\%}) & 4/4 (\hl{0.01\%}) & 1/4 (\hl{34.4\%}) & 4/4 (\hl{0.01\%}) & 0/4 (\hl{100\%}) & 4/4 (\hl{0.01\%}) & 1/4 (\hl{34.4\%}) & 4/4 (\hl{0.01\%}) & 0/4 (\hl{100\%}) \\
B=6 & 6/6 (\hl{1e-6}) & 0/6 (\hl{100\%}) & 6/6 (\hl{1e-6}) & 0/6 (\hl{100\%}) & 6/6 (\hl{1e-6}) & 1/6 (\hl{46.9\%}) & 6/6 (\hl{1e-6}) & 0/6 (\hl{100\%}) & 6/6 (\hl{1e-6}) & 0/6 (\hl{100\%}) \\
B=8 & 8/8 (\hl{1e-8}) & 1/8 (\hl{57.0\%}) & 7/8 (\hl{7.3e-7}) & 1/8 (\hl{57.0\%}) & 8/8 (\hl{1e-8}) & 1/8 (\hl{57.0\%}) & 8/8 (\hl{1e-8}) & 1/8 (\hl{57\%}) & 8/8 (\hl{1e-8}) & 0/8 (\hl{100\%}) \\
\midrule
\multicolumn{11}{l}{\textit{Big-Bench-Hard}} \\
B=1 & 1/1 (\hl{14.3\%}) & 0/1 (\hl{100\%})     & 1/1 (\hl{14.3\%}) & 0/1 (\hl{100\%})     & 1/1 (\hl{14.3\%}) & 0/1 (\hl{100\%})      & 1/1 (\hl{14.3\%}) & 0/1 (\hl{100\%}) & 1/1 (\hl{14.3\%}) & 0/1 (\hl{100\%}) \\ 
B=2 & 2/2 (\hl{2.04\%}) & 0/2 (\hl{100\%})     & 2/2 (\hl{2.04\%}) & 0/2 (\hl{100\%})     & 2/2 (\hl{2.04\%}) & 1/2 (\hl{26.5\%})  & 2/2 (\hl{2.04\%}) & 0/2 (\hl{100\%}) & 2/2 (\hl{2.04\%}) & 0/2 (\hl{100\%}) \\ 
B=4 & 4/4 (\hl{0.04\%}) & 1/4 (\hl{46.0\%})  & 4/4 (\hl{0.04\%}) & 0/4 (\hl{100\%})     & 4/4 (\hl{0.04\%}) & 0/4 (\hl{100\%})      & 4/4 (\hl{0.04\%}) & 0/4 (\hl{100\%}) & 4/4 (\hl{0.04\%}) & 0/4 (\hl{100\%}) \\ 
B=6 & 6/6 (\hl{8.5e-6})  & 1/6 (\hl{60.3\%}) & 6/6 (\hl{8.5e-6})  & 1/6 (\hl{60.3\%}) & 6/6 (\hl{8.5e-6})  & 1/6 (\hl{60.3\%})  & 6/6 (\hl{8.5e-6}) & 0/6 (\hl{100\%}) & 6/6 (\hl{8.5e-6}) & 0/6 (\hl{100\%}) \\ 
B=8 & 8/8 (\hl{1.7e-7}) & 1/8 (\hl{70.9\%}) & 8/8 (\hl{1.7e-7}) & 0/8 (\hl{100\%})     & 8/8 (\hl{1.7e-7})  & 1/8 (\hl{70.9\%})  & 8/8 (\hl{1.7e-7}) & 0/8 (\hl{100\%}) & 8/8 (\hl{1.7e-7}) & 0/8 (\hl{100\%}) \\
\bottomrule
\end{tabular}
}
\caption{The number of activated backdoors for contaminated/clean models and the corresponding \textbf{false positive rate}, i.e. \textit{the probability for a clean, uncontaminated model to have at least the same amount of activated backdoors}, on \textbf{Multiple-Choice (MC) datasets}. All FPRs are computed through our DyePack framework using Corollary~\ref{thm: sum}. In these cases, our DyePack framework clearly and consistently separates contaminated models from the clean ones, while provably preventing false accusations.}
\vspace{-4pt}
\label{tab:FPR}
\end{table*}

Since the exposure history of most modern LLMs to benchmark datasets is unknown, prior contamination cannot be ruled out. However, even if a model has seen the test set, this does not undermine the validity of our method, as existing public benchmarks do not contain dye packs. Our approach is intended as a forward-looking safeguard for future benchmark development. Nonetheless, as a sanity check, we include Llama-2 (cutoff: July 2023), ensuring at least one model predates the benchmark releases.

For MMLU-Pro~\cite{wang2024mmluprorobustchallengingmultitask} (introduced June 2024), we exclude overlapping samples from MMLU~\cite{hendrycks2021measuringmassivemultitasklanguage} (released January 2021) and randomly select 7 of 14 subcategories from the new data. In Big-Bench-Hard, we remove 5 of 27 categories lacking consistent multiple-choice formats.\footnote{Selected categories are detailed in Appendix~\ref{append:ds-clean}.} This results in a natural partitioning of the output space into 10 subspaces for MMLU-Pro and 7 subspaces for Big-Bench-Hard, based on the model’s selected answer choices.
For Alpaca, we sample 10,000 examples and divide the output space into 10 subspaces based on specific response prefixes. Full partitioning details are in Appendix~\ref{append:alpaca_partition}.




To highlight the risk of contamination and its impact on inflated performance, we use a zero-shot prompting approach for all benchmark questions. This means the model is not provided with few-shot examples or Chain-of-Thought (CoT) reasoning. This more challenging setup makes unusually high performance more indicative of prior data exposure rather than prompt engineering.

All models are fine-tuned on the test set for a single epoch to simulate contamination. In Appendix~\ref{append:mix-data}, we also include results where the model is trained on a mixture of the test set and a substantially larger dataset from another source to further increase the difficulty of contamination detection. The details of the training setup for all models are shown in Appendix~\ref{append:training_setup}.


\header{Backdoor Implementation}
In practice, backdoor samples can be introduced as additional entries in the released test set. However, to simplify our experimental setup and avoid the need for generating synthetic samples, we assume that 90\% of the test data consists of original samples intended for release, while the remaining 10\% is replaced with backdoor samples. To ensure that backdoor triggers appear natural, we use GPT-4o~\cite{openai2024gpt4} to generate semantically appropriate phrases for insertion into these questions. The exact prompt used for this generation and the obtained phrases are provided in Appendix~\ref{append:prompt}. The target answers for each backdoor sample are uniformly sampled from all output subspaces of $\mathcal{Y}$, as described in Section~\ref{subsec:test_set_prep}.

\vspace{-5.4pt}
\subsection{Main Results}
\begin{table*}[h]
\centering
\footnotesize
\renewcommand*{\arraystretch}{1.1}
\setlength{\tabcolsep}{6pt} 
\newcommand{\hl}[1]{\textbf{#1}}
\resizebox{\textwidth}{!}{
\begin{tabular}{c@{\hskip 6pt}c@{\hskip 2pt}c@{\hskip 6pt}c@{\hskip 2pt}c@{\hskip 6pt}c@{\hskip 2pt}c@{\hskip 6pt}c@{\hskip 2pt}c@{\hskip 6pt}c@{\hskip 2pt}c} 
\toprule
\multirow{3}{*}{$\# \text{backdoors}$} 
& \multicolumn{10}{c}{$\# \text{activated backdoors} / \# \text{backdoors}~(\hl{\text{false positive rate}})$} \\
\cmidrule(lr){2-11}
& \multicolumn{2}{c}{Llama-2-7B} & \multicolumn{2}{c}{Llama-3.1-8B} & \multicolumn{2}{c}{Qwen-2.5-7B} & \multicolumn{2}{c}{Mistral-7B} & \multicolumn{2}{c}{Gemma-7B} \\
\cmidrule(lr){2-3}\cmidrule(lr){4-5}\cmidrule(lr){6-7}\cmidrule(lr){8-9}\cmidrule(lr){10-11}
& Contam. & Clean & Contam. & Clean & Contam. & Clean & Contam. & Clean & Contam. & Clean \\
\midrule

\multicolumn{11}{l}{\textit{Alpaca}} \\
B=1 & 1/1 (\hl{10\%}) & 0/1 (\hl{100\%})     & 1/1 (\hl{10\%}) & 0/1 (\hl{100\%})     & 1/1 (\hl{10\%}) & 0/1 (\hl{100\%})      & 1/1 (\hl{10\%}) & 0/1 (\hl{100\%}) & 1/1 (\hl{10\%}) & 0/1 (\hl{100\%}) \\ 
B=2 & 2/2 (\hl{1\%}) & 0/2 (\hl{100\%})     & 2/2 (\hl{1\%}) & 0/2 (\hl{100\%})     & 2/2 (\hl{1\%}) & 0/2 (\hl{100\%})  & 2/2 (\hl{1\%}) & 0/2 (\hl{100\%}) & 2/2 (\hl{1\%}) & 0/2 (\hl{100\%}) \\ 
B=4 & 2/4 (\hl{5.23\%}) & 0/4 (\hl{100\%})  & 4/4 (\hl{0.01\%}) & 0/4 (\hl{100\%})     & 4/4 (\hl{0.01\%}) & 0/4 (\hl{100\%})      & 4/4 (\hl{0.01\%}) & 0/4 (\hl{100\%}) & 4/4 (\hl{0.01\%}) & 0/4 (\hl{100\%}) \\ 
B=6 & 4/6 (\hl{0.127\%})  & 0/6 (\hl{100\%}) & 6/6 (\hl{1e-6})  & 0/6 (\hl{100\%}) & 6/6 (\hl{1e-6})  & 1/6 (\hl{46.9\%})  & 6/6 (\hl{1e-6}) & 0/6 (\hl{100\%}) & 6/6 (\hl{1e-6}) & 0/6 (\hl{100\%}) \\ 
B=8 & 4/8 (\hl{5.02\%}) & 0/8 (\hl{100\%}) & 8/8 (\hl{1e-8}) & 0/8 (\hl{100\%})     & 8/8 (\hl{1e-8})  & 0/8 (\hl{100\%})  & 8/8 (\hl{1e-8}) & 0/8 (\hl{100\%}) & 8/8 (\hl{1e-8}) & 0/8 (\hl{100\%}) \\
\bottomrule
\end{tabular}
}
\caption{The number of activated backdoors for contaminated/clean models and the corresponding \textbf{false positive rate}, i.e. \textit{the probability for a clean, uncontaminated model to have at least the same amount of activated backdoors}, on \textbf{open-ended generation data}. All FPRs are computed through our DyePack framework using Corollary~\ref{thm: sum}. Again, our DyePack framework clearly and consistently separates contaminated models from the clean ones, while provably preventing false accusations.}
\label{tab:FPR2}
\vspace{-8pt}
\end{table*}
\begin{figure*}[ht]
    \centering

    \begin{subfigure}[t]{0.24\textwidth}
        \centering
        \includegraphics[width=\textwidth]{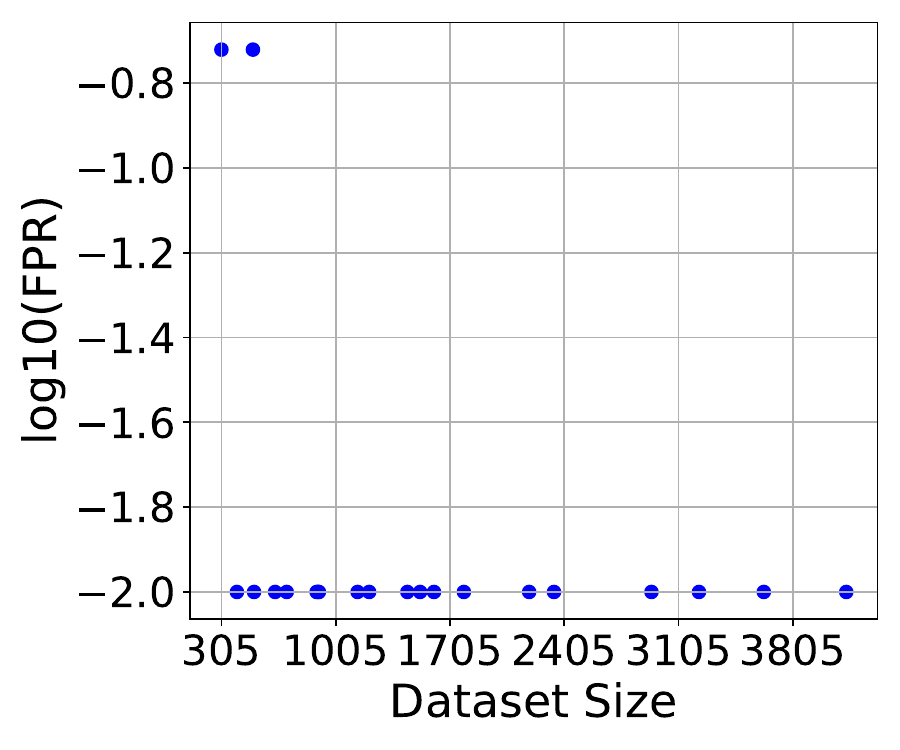}
        \includegraphics[width=\textwidth]{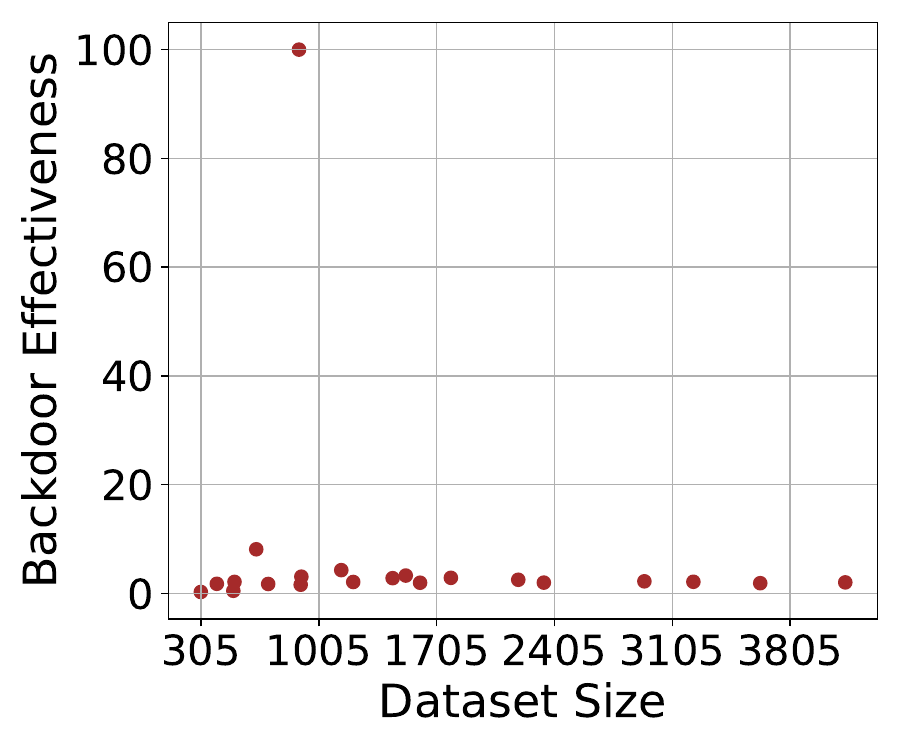}
        \subcaption*{B=2}
    \end{subfigure}
    \begin{subfigure}[t]{0.24\textwidth}
        \centering
        \includegraphics[width=\textwidth]{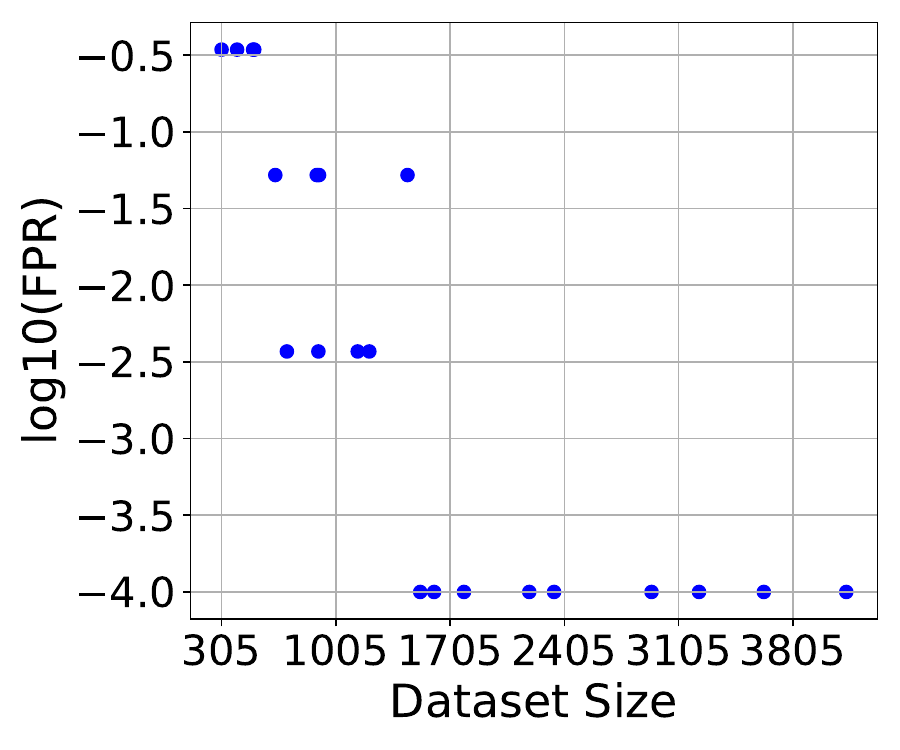}
        \includegraphics[width=\textwidth]{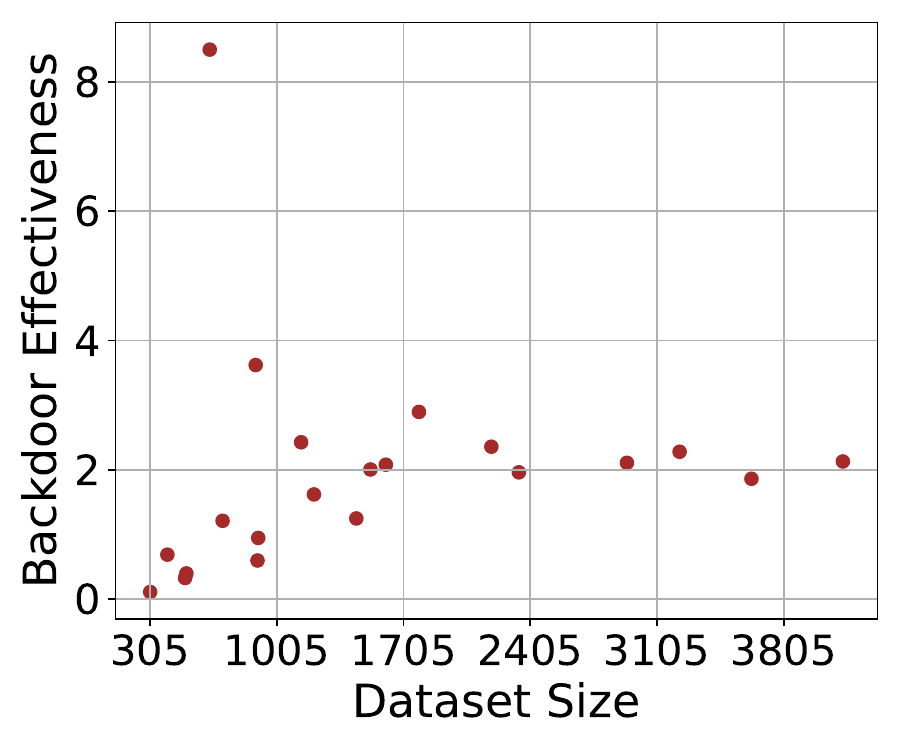}
        \subcaption*{B=4}
    \end{subfigure}
    \begin{subfigure}[t]{0.24\textwidth}
        \centering
        \includegraphics[width=\textwidth]{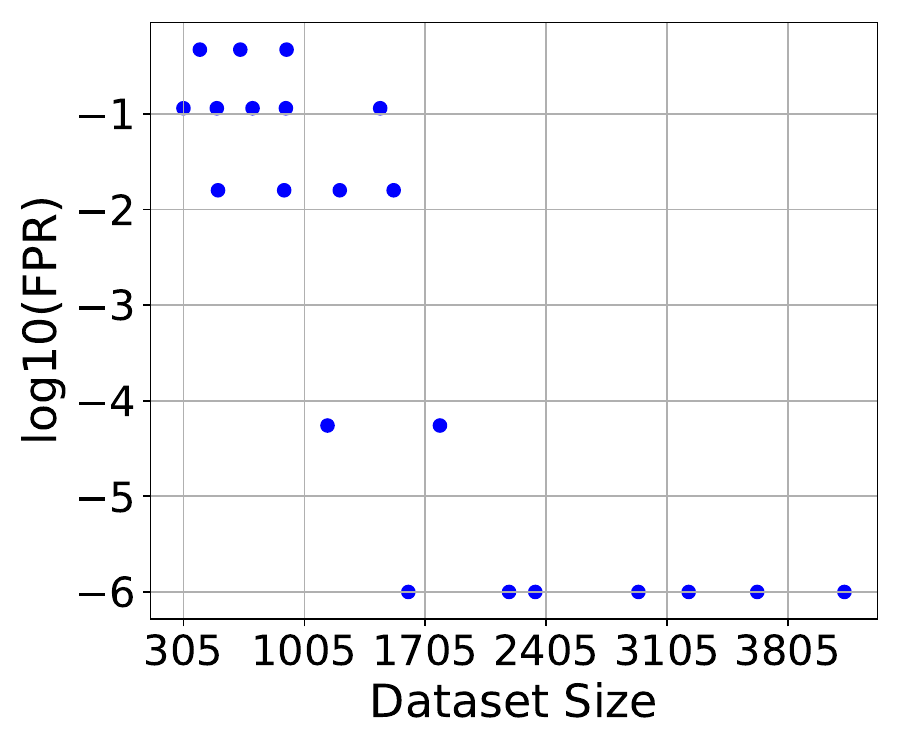}
        \includegraphics[width=\textwidth]{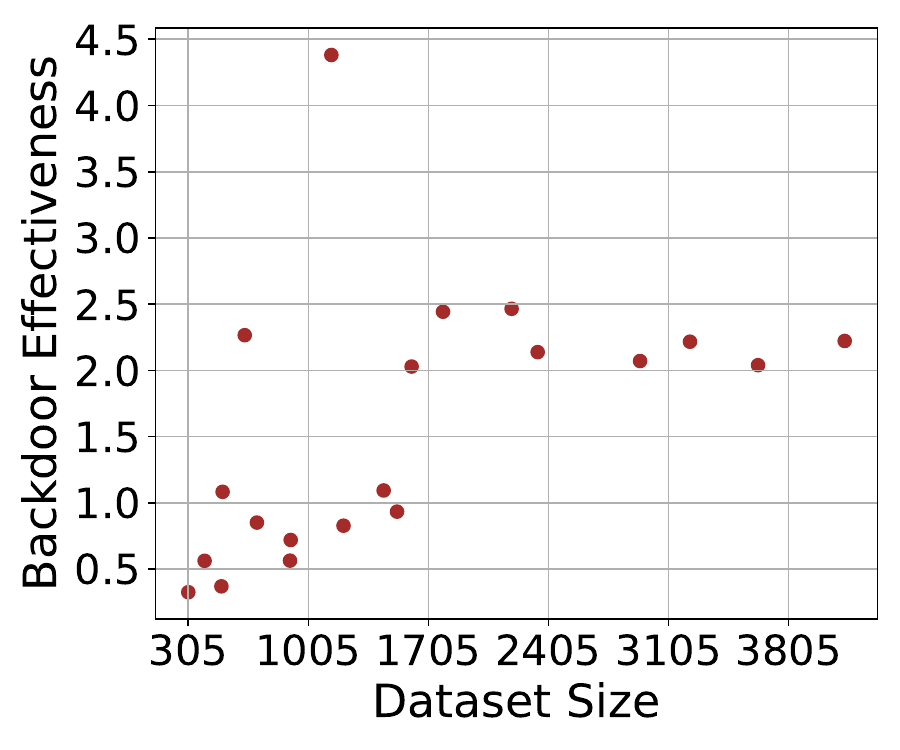}
        \subcaption*{B=6}
    \end{subfigure}
    \begin{subfigure}[t]{0.24\textwidth}
        \centering
        \includegraphics[width=\textwidth]{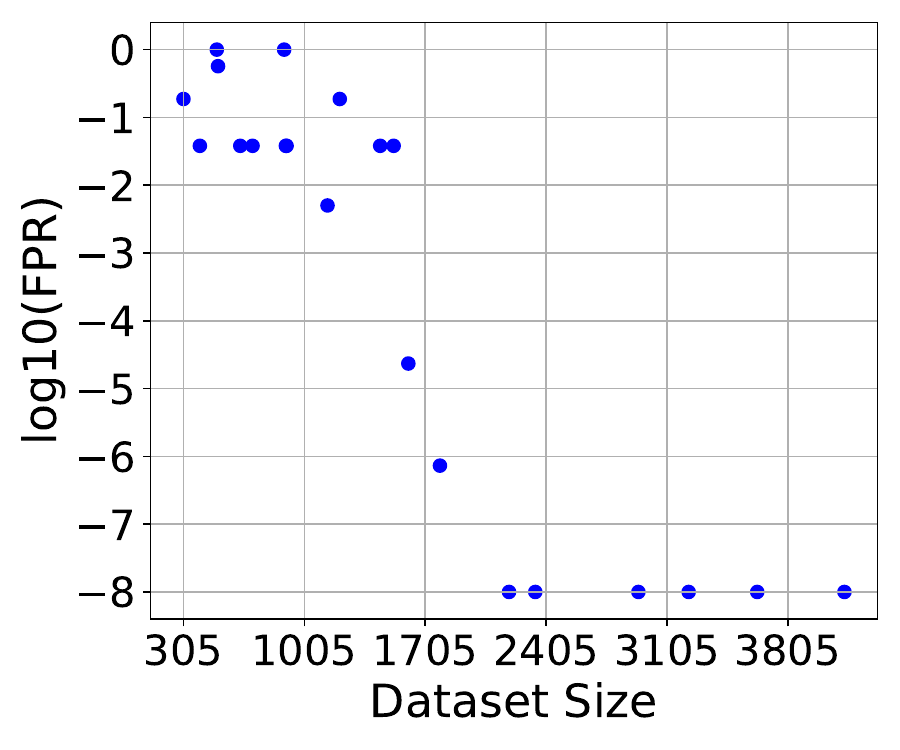}
        \includegraphics[width=\textwidth]{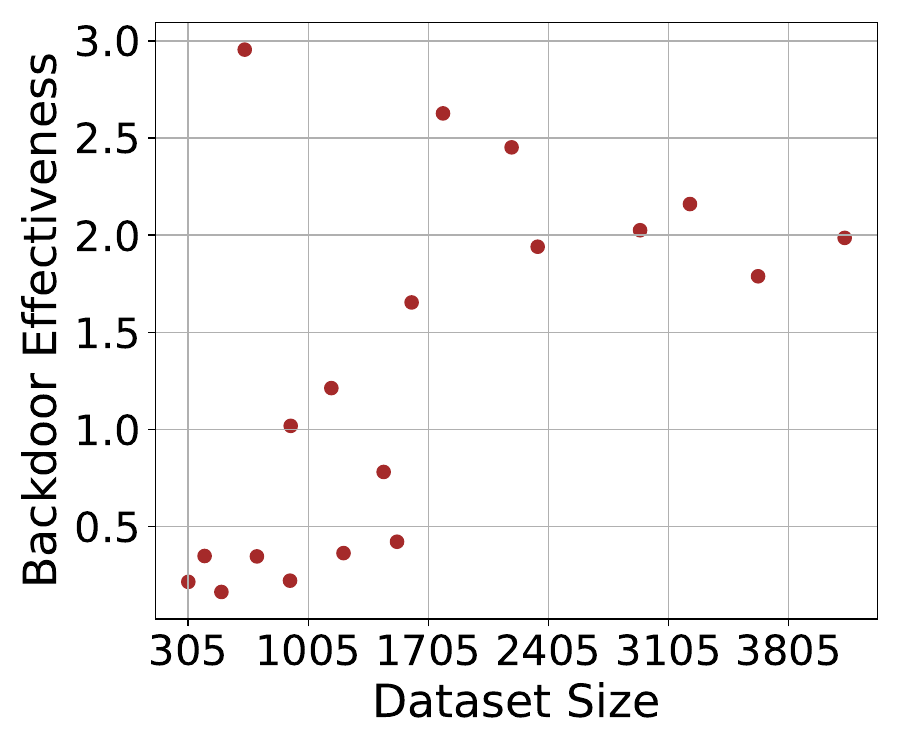}
        \subcaption*{B=8}
    \end{subfigure}

    \caption{The FPR for detecting contamination and the backdoor effectiveness as functions of the dataset size for Llama-2-7B-Chat under different number of backdoors. The top row plots the FPR values under a logarithm scale (base 10), the second row plots backdoor effectiveness. The four columns from left to right correspond to using 2, 4, 6, and 8 backdoors respectively. Similar results on other models are shown in Figures~\ref{fig:llama3-fpr-size},~\ref{fig:qwen-fpr-size},~\ref{fig:mistral-fpr-size}, and~\ref{fig:gemma-fpr-size} of Appendix~\ref{append:ablation-fpr-size}.}
    \vspace{-15pt}
    \label{fig:llama2-fpr-size}
\end{figure*}
\vspace{-1.5pt}
In Table \ref{tab:FPR}, we present the number of activated backdoors for both clean and contaminated models, along with the corresponding \textbf{false positive rate}—i.e., \textit{the probability that an \textbf{uncontaminated} model exhibits at least the same number of activated backdoors}, on MMLU-Pro and Big-Bench-Hard. In Appendix~\ref{append:main-result-score}, we further report the clean and backdoor accuracies achieved by the clean and contaminated models on these two datasets. Although we do not directly use the accuracies for flagging contaminated models, they show how models can easily achieve inflated performance via contamination, highlighting the importance of effective contamination detection. Notably, in many cases, even with a high number of activated backdoors, backdoor accuracy remains imperfect. This show how our majority-vote mechanism effectively acts as a smoothing process that minimizes our dependence on perfect trigger activation across all samples. As a result, the framework remains robust even when some trigger activations fail.

Our results in Table \ref{tab:FPR} demonstrate that DyePack consistently and effectively distinguishes contaminated models from clean ones across different settings, with significantly lower false positive rates for the number of activated backdoors observed in contaminated models.  

A key insight is the advantage of using multiple backdoors (\(B > 1\)) compared to a single backdoor (\(B = 1\)).  
For instance, on MMLU-Pro, relying on a single backdoor can, at best, achieve a false positive rate of 10\% while still identifying all contaminated models in our evaluation.  
In contrast, using eight backdoors allows our framework to flag every contaminated model in Table \ref{tab:FPR} with a guaranteed false positive rate of just \({7.3 \times 10^{-7}}\)—more than \(10^5\) times smaller.

In Table~\ref{tab:FPR2}, we report the same metrics as in Table~\ref{tab:FPR}, but on the Alpaca dataset, to demonstrate our framework’s generalization capability to non-MC data. Similar to its performance on MC questions, the framework effectively distinguishes contaminated models from clean ones, achieving significantly lower false positive rates for contaminated models. Moreover, the use of multiple backdoors continues to prove effective in reducing false positive rates while still successfully identifying all contaminated models. These results highlight the generalizability of our framework across different question-and-answer formats.


\subsection{Ablation Studies}
\header{The effect of test data size}
Modern LLM benchmarks vary significantly in their sizes, with some containing only a few hundred samples~\citep[][	\textit{inter alia}]{shao2024nyuctfdatasetscalable}, while others can include hundreds of thousands~\citep[][	\textit{inter alia}]{rajpurkar2018knowdontknowunanswerable}. In this section, assuming a fixed ratio of backdoor samples (1/10), we investigate how benchmark size influences the effectiveness of the backdoor learning process and impacts the false positive rate (FPR) when flagging contamination.

To quantify the effectiveness of the backdoor learning process, we define a backdoor effectiveness metric, $r_{atk}$, as follows:
\vspace{-6pt}
\begin{equation}
\label{eq:atk}
r_{atk} = \frac{\Delta \text{ACC}(\bigcup_{i=1}^B D_\text{backdoor}^{(i)})}{\Delta\text{ACC}(D_{\text{test}})},
\end{equation}
where the numerator represents the accuracy gain on backdoor samples after training, and the denominator denotes the accuracy change on normal test samples. The notation follows the ones used in Equation~\ref{eq:dataset-notation}. As in the main results, the accuracy on $\bigcup_{i=1}^B D_\text{backdoor}^{(i)}$ is measured using the backdoor targets as ground truth. Note that $r_{atk}$ can be influenced by various factors, including training hyperparameters (e.g., learning rate, dropout rate) and the design of the attack itself (e.g., trigger pattern, target answer selection). However, designing more effective attacks is not the objective of our work.

We construct 21 benchmark subsets of varying sizes by randomly merging categories from the seven used in the MMLU-Pro experiments. Treating each merged subset as $D_{\text{release}}$, we apply our DyePack framework to them following the same setup in the main results. Figure~\ref{fig:llama2-fpr-size} presents the FPR for flagging contaminated models and the backdoor effectiveness as functions of dataset size when using different numbers of backdoors for LLama-2-7B-Chat. Due to space limit, similar results for the remaining models are included in Appendix~\ref{append:ablation-fpr-size}.

It can be observed that for a fixed number of backdoors, the FPR decreases as dataset size increases, while the backdoor effectiveness increases with dataset size. Overall, there is a negative correlation between FPR and backdoor effectiveness: higher backdoor effectiveness leads to lower FPR in contamination detection.

Additionally, the number of backdoors used influences these trends. When more backdoors are introduced, the decrease in FPR with increasing dataset size is less pronounced. Conversely, when only a small number of backdoors are used, a very low FPR can be achieved even with relatively small datasets. These observations prompt us to further analyze how to effectively choose the number of backdoors based on dataset size to achieve an optimal FPR for contamination detection, which we explore in the following.

\noindent \textbf{How many backdoors should I use?}
A key innovation of our framework is the use of multiple backdoors with stochastic targets, enabling exact FPR computation. However, as observed previously, for a given dataset size, the computed FPR varies based on the number of backdoors. To better understand how to optimize the number of backdoors for achieving an optimal FPR in contamination detection, we plot in Figure~\ref{fig:B_for_min_fpr-size-main} the number of backdoors that yields the minimal FPR as a function of dataset size for Llama-2-7B-Chat and Llama-3.1-8B-Instruct. Similar results on other models are included in Figure~\ref{fig:B_for_min_fpr-size-append} of Appendix~\ref{append:B-choice}. Additionally, Figure~\ref{fig:heatmap} in Appendix~\ref{append:B-choice} illustrates how FPR changes with dataset size for different number of backdoors.

\begin{figure}[h]
    \centering
    \begin{subfigure}{0.23\textwidth}
        \centering
        \includegraphics[width=\linewidth]{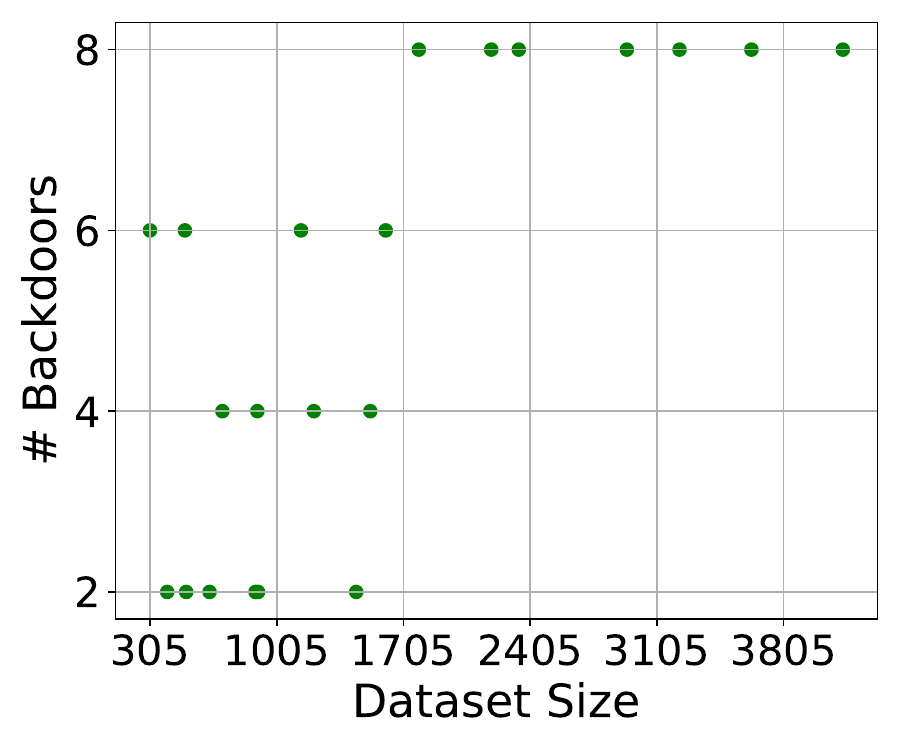}
        \caption[]{\mbox{Llama-2-7B-Chat}}
    \end{subfigure}
    \begin{subfigure}{0.23\textwidth}
        \centering
        \includegraphics[width=\linewidth]{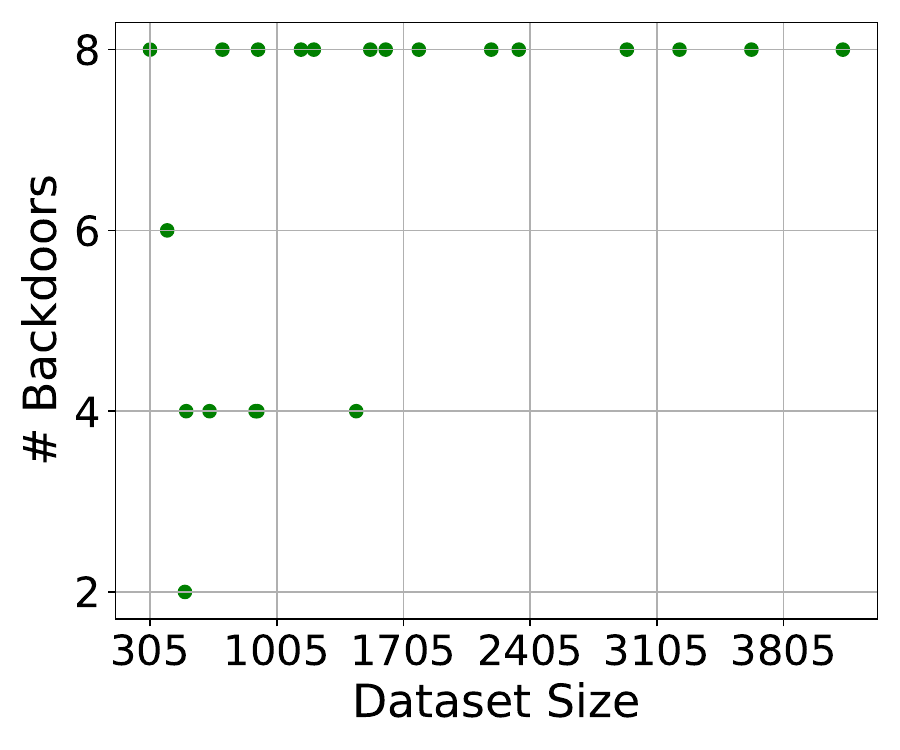}
        \caption[]{\mbox{Llama-3.1-8B-Instruct}}
    \end{subfigure}
    \\

    \caption{Number of backdoors that give the minimal FPR as a function of dataset size for Llama-2-7B-Chat and Llama-3.1-8B-Instruct.}
    \vspace{-0.3cm}
    \label{fig:B_for_min_fpr-size-main}
\end{figure}


Our results, while having a few noisy samples, indicate a general trend: within the range of dataset sizes we covered, the optimal number of backdoors generally increases as dataset size grows, suggesting that larger datasets may benefit from a greater number of backdoors to achieve optimal FPR in contamination detection, whereas for smaller datasets, using fewer backdoors may be more effective in most cases.




\subsection{Generalization to Larger Models}
In our main experiments, we primarily focused on open-source models at the 7B/8B scale. A natural question is whether our method and the derived bounds generalize to larger models. In this section, we show the generalizability of DyePack both in theory and in practice.

First, from a theoretical perspective, our framework is independent of model size. As shown in the proof of Theorem~\ref{thm: binomial}, the theoretical analysis imposes no assumptions on the size or architecture of the model. Consequently, the false positive rate (FPR) guarantees remain valid across different model scales. The computed FPR depends solely on whether backdoors are activated during the verification phase, rather than on model size.

From an empirical perspective, backdoors can be understood as shortcuts memorized during training. Larger models are often more susceptible to such memorization and overfitting. Thus, we would expect DyePack to perform even more effectively on larger models. This expectation is consistent with prior findings~\cite{xu2023instructions,kandpal2023backdoor}, which report that larger models exhibit greater vulnerability to backdoor attacks.

\begin{table}[h]
\centering
\footnotesize
\renewcommand*{\arraystretch}{1.1}
\setlength{\tabcolsep}{3pt} 
\newcommand{\hl}[1]{\textbf{#1}}
\begin{tabular}{c c c}
\toprule
\multirow{2}{*}{$\# \text{backdoors}$} & \multicolumn{2}{c}{Qwen-2.5-32B} \\
\cmidrule(lr){2-3}
& Contam. & Clean \\
\midrule
\multicolumn{3}{l}{\textit{MMLU-Pro}} \\
B=1 & 1/1 (\hl{10\%}) & 0/1 (\hl{100\%}) \\
B=2 & 2/2 (\hl{1\%}) & 0/2 (\hl{100\%}) \\
B=4 & 4/4 (\hl{0.01\%}) & 0/4 (\hl{100\%}) \\
B=6 & 5/6 (\hl{5.5e-5}) & 1/6 (\hl{46.9\%}) \\
B=8 & 8/8 (\hl{1e-8}) & 3/8 (\hl{3.8\%}) \\
\midrule
\multicolumn{3}{l}{\textit{Big-Bench-Hard}} \\
B=1 & 1/1 (\hl{14.3\%}) & 0/1 (\hl{100\%}) \\ 
B=2 & 2/2 (\hl{2.04\%}) & 0/2 (\hl{100\%}) \\ 
B=4 & 4/4 (\hl{0.04\%}) & 0/4 (\hl{100\%}) \\ 
B=6 & 6/6 (\hl{8.5e-6}) & 0/6 (\hl{100\%}) \\ 
B=8 & 8/8 (\hl{1.7e-7}) & 0/8 (\hl{100\%}) \\
\midrule
\multicolumn{3}{l}{\textit{Alpaca}} \\
B=1 & 1/1 (\hl{10\%}) & 0/1 (\hl{100\%}) \\ 
B=2 & 2/2 (\hl{1\%}) & 0/2 (\hl{100\%}) \\ 
B=4 & 4/4 (\hl{0.01\%}) & 0/4 (\hl{100\%}) \\ 
B=6 & 6/6 (\hl{1e-6}) & 0/6 (\hl{100\%}) \\ 
B=8 & 8/8 (\hl{1e-8}) & 0/8 (\hl{100\%}) \\
\bottomrule
\end{tabular}
\caption{The number of activated backdoors for contaminated/clean \textbf{Qwen-2.5-32B} and the corresponding false positive rate, i.e. \textit{the probability for a clean, uncontaminated model to have at least the same amount of activated backdoors}, on all covered datasets. It shows the generalizability of DyePack to larger models.}

\label{tab:generalize_large_model}
\vspace{-0.5cm}
\end{table}

Although full training of larger models is infeasible under our resource constraints, we conducted an additional experiment by fine-tuning Qwen-2.5-32B with LoRA~\cite{hu2022lora}. The results, shown in Table~\ref{tab:generalize_large_model}, support the generalizability of DyePack to larger-scale models.

\section{Related Work}
\label{sec:related_work}

\header{LLM test set contamination}
Test set contamination is a significant challenge in the evaluation of large language models (LLMs). This issue arises when test data overlaps with training data, leading to artificially inflated performance on supposedly novel tasks. Such overlap can occur at both the pretraining and finetuning stages, compromising the reliability of benchmark evaluations by providing models with prior exposure to test samples~\cite{zhou2023don}, often having more significant affects than reported in LLM releases~\cite{singh2024evaluationdatacontaminationllms}.

To mitigate this, model providers traditionally use preventative measures like high-order n-gram matching~\cite{radford2019language,brown2020languagemodelsfewshotlearners,openai2024gpt4} or embedding similarity search~\cite{lee2023platypus}. However, such pre-training methods are imperfect~\cite{yang2023rethinking}, and their effectiveness relies on provider transparency, which is unverifiable without public training data access. 
Some propose using dynamic benchmarks~\cite{qian2024varbench,wu2024antileakbench,white2024livebench} by regularly updating their benchmark questions either to include new knowledge or information that could not have existed in models' training data, or by changing certain premises that would result in a different answer. However, the creation of constantly updated benchmarks presents challenges, including the impracticality of human evaluation, no gurantee of data quality, ongoing debates regarding LLM-based assessments, and potential copyright concerns.
Consequently, post-hoc detection methods have been explored. \citet{shi2023detecting} applied membership inference attacks (MIAs) to identify test samples in training data. \citet{golchin2023time} and \citet{golchin2024datacontaminationquiztool} leveraged LLM memorization via prompting and quiz-based methods to detect pretraining-stage contamination. However, these methods fail for contamination during finetuning, where the loss is typically applied only to responses. Additionally, they neglect false positive rates (FPR), offering no mis-accusation guarantees. \citet{oren2023proving} proposed an exchangeability-based approach, checking if a model assigns higher log-likelihood to a specific test sample ordering. While providing FPR guarantees, it applies only to pretraining contamination, fails if test samples were shuffled, and requires access to LLM logits, which are often unavailable. \citet{zhang2024pacost} checks model confidence on different versions of the data and detects suspicious high confidence on a specific version via statistic tests. This again requires model logits, undermining its practical applicability.

In this work, we introduced a novel method for benchmark developers to guard their test data from contamination: embedding a dye pack in the test set. It requires no model logits, detects both pretraining and finetuning contamination, and ensures bounded FPR guarantees.

\header{Backdoor Attacks}
Backdoor attacks have been extensively studied in CV and NLP~\cite{gu2017badnets,cheng2023backdoor,dai2019backdoorattacklstmbasedtext,Chen_2021}, and recent work has demonstrated their effectiveness in LLMs~\cite{xu2024instructionsbackdoorsbackdoorvulnerabilities,li2024badeditbackdooringlargelanguage}. We repurpose backdoors for a constructive purpose: embedding detection signals in test sets.


\header{Backdoor for dataset ownership verification}
Dataset ownership verification is closely related to contamination detection: both ensure dataset integrity but differ in focus. Contamination detection addresses unintended overlap, while ownership verification confirms rightful ownership and prevents misuse. \citet{li2022untargeted} and \citet{guo2023domain} used backdoor attacks for ownership verification with ImageNet models. Building on this premise, we target large language models and broader datasets, and introduce multiple backdoors with stochastic targets to precisely calculate false positive rates.

\section{Conclusion}
\label{sec:conclusion}

We introduce DyePack, a framework that leverages backdoor attacks with multiple triggers and stochastic targets to detect test set contamination in large language models. Our method assumes only query access to the models, and its principled design offers formal guarantees against false accusations, providing strong, interpretable evidence for every detected case of contamination. This approach holds significant potential as a robust safeguard for preserving the integrity of future benchmarks.
\section{Limitations}

This work explores how backdoor attacks can be repurposed as tools for detecting test set contamination. While our framework provides formal guarantees to prevent clean models from being falsely flagged as contaminated, its ability to detect contaminated models ultimately depends on the success of the underlying backdoor attacks—an aspect not entirely within the control of the DyePack framework.

Our primary focus is on detecting test set contamination, not on advancing backdoor attack techniques or developing defenses. Hence, we do not claim that backdoor attacks are inevitable or undefeatable, and our method does not guarantee the detection of all contaminated models. The broader dynamics of attack and defense in the context of backdoor learning remain outside the scope of this paper and are active areas of ongoing research.

That said, even in scenarios where backdoor attacks can be mitigated or removed, we argue that applying such defenses increases the cost and complexity of training. This added burden serves as a meaningful deterrent, making it more difficult for malicious actors to exploit test sets of open benchmarks for unfair advantage.

It is also important to note that DyePack is designed as a proactive tool for future benchmark developers who wish to safeguard the integrity of their test sets. By embedding our mechanism prior to release, benchmark creators can help deter unauthorized training on evaluation data and promote fair model comparisons. Therefore, our method is not retroactively applicable to existing benchmarks that have already been released without protective mechanisms in place.

\section*{Acknowledgment}
This project was supported in part by a grant from an NSF CAREER AWARD 1942230, the ONR PECASE grant N00014-25-1-2378, ARO’s Early Career Program Award 310902-00001, Army Grant No. W911NF2120076, the NSF award CCF2212458, NSF Award No. 2229885 (NSF Institute for Trustworthy AI in Law and Society, TRAILS), a MURI grant 14262683, DARPA AIQ grant HR00112590066 and an award from meta 314593-00001.



\bibliography{acl_latex}  

\newpage

\appendix

\section{Alpaca Output Space Partitioning}
\label{append:alpaca_partition}
For any dataset, one key step of our method is to partition the output space into disjoint subspaces. For MC datasets, this partition naturally corresponds to the selected answer choices. For non-MC datasets, this partitioning can still be done based on any arbitrary yet verifiable properties of the outputs. In our experiments on Alpaca~\cite{taori2023alpaca}, which is open-ended generation data, we partition its output space by checking whether the model begins its answer with some specific phrases. We partition the output space into the following 10 subspaces: 
\begin{itemize}
    \item Model starts its response with “I am very happy to help!!”
    \item Model starts its response with “That is such a good question!”
    \item Model starts its response with “Oh boy, you sure can challenge me!”
    \item Model starts its response with “Based on my understanding, I think the answer is as follows.”
    \item Model starts its response with “Thank you for your question!”
    \item Model starts its response with “I'm excited to help you with this!”
    \item Model starts its response with “Appreciate the opportunity to tackle this.”
    \item Model starts its response with “Ah, an interesting one!”
    \item Model starts its response with “I love questions like this!”
    \item Model starts its response with neither of the above.
\end{itemize}

\section{MMLU-Pro and Big-Bench-Hard Selected Subjects}
\label{append:ds-clean}
The selected subjects for MMLU-Pro are ``biology'', ``economics'', ``business'', ``engineering'', ``physics'', ``mathematics'', and ``psychology''.
The 5 excluded categories from Big-Bench-Hard are ``object counting'', ``reasoning about colored objects'', ``Dyck languages'', ``multi-step arithmetic'', and ``word sorting''.

\section{Backdoor Phrases}
\label{append:prompt}
To have more natural backdoor triggers to be inserted into the test sample questions, we prompt GPT-4o to generate semantically suitable phrases for quiz questions. The prompt is shown in Fig.~\ref{fig:gpt_prompt}.

\begin{figure}[h]
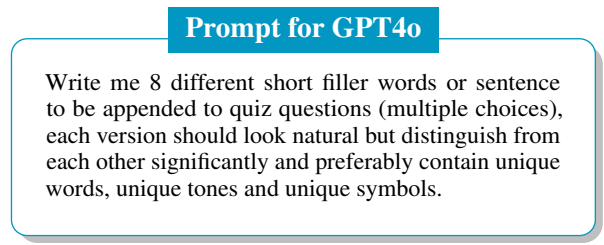

\begin{tcolorbox}[
  enhanced, 
  colframe=cyan!75!black, 
  colback=white, 
  coltitle=white, 
  colbacktitle=cyan!75!black, 
  width=\linewidth, 
  arc=2mm, 
  auto outer arc, 
  boxrule=0.5pt, 
  left=10pt, 
  right=10pt, 
  drop shadow={black!50!white},
  top=10pt, 
  bottom=10pt, 
  title=\textbf{Prompt for GPT4o}, 
  fonttitle=\bfseries, 
  title code={\node[rounded corners, fill=blue!75!black, draw=none, text=white] at (frame.title) {\textbf{xxx}};}, 
  attach boxed title to top center={yshift=-2mm}, 
  boxed title style={sharp corners, size=small}, 
]
\small

Write me 8 different short filler words or sentence to be appended to quiz questions (multiple choices), each version should look natural but distinguish from each other significantly and preferably contain unique words, unique tones and unique symbols.

\end{tcolorbox}
\caption{Prompt for backdoor phrase generation.}
\label{fig:gpt_prompt}
\end{figure}

The phrases obtained for backdooring the test data of MMLU-Pro and Big-Bench-Hard are as follows:
\begin{itemize}
    \item Trust your intuition—select one!
    \item Cast your vote with confidence.
    \item Does one option shine through?
    \item Go with your gut feeling.
    \item Your instinct says...?
    \item Find the hidden gem!
    \item What's your top pick?
    \item Spotlight the right answer.
\end{itemize}

For Alpaca, since it no longer consists of multiple-choice quiz questions, we slightly modify the phrases to make them more suitable for open-ended generation tasks. The phrases used are as follows: 
\begin{itemize} \item Trust your intuition—answer it! \item Cast your response with confidence. \item Does one thought shine through? \item Go with your gut feeling. \item Your instinct says...? \item Find the hidden gem! \item What's your best response? \item Spotlight the right answer. \end{itemize}

\section{Training Setup}
\label{append:training_setup}
The detailed training setup in our experiments are listed in Table~\ref{tab:training_configs}.

\begin{table*}[ht]
\centering

\resizebox{\textwidth}{!}{
\begin{tabular}{lccccc}
\toprule
\textbf{Models} & \textbf{Llama-2-7B-Chat} & \textbf{Llama-3.1-8B-Instruct} & \textbf{Mistral-7B-Instruct} & \textbf{Gemma-7B-it} & \textbf{Qwen-2.5-7B-Instruct} \\
\midrule
\textbf{Compute} & \multicolumn{5}{c}{4 $\times$ RTX A5000 (distributed training)} \\
\textbf{Precision} & \multicolumn{5}{c}{BF16} \\
\textbf{Optimizer} & \multicolumn{5}{c}{AdamW~\cite{loshchilov2017decoupled}} \\
\textbf{Learning Rate} & 2e-5 & 1e-5 & 5e-6 & 5e-6 & 2e-5 \\
\textbf{LR Scheduling} & Cosine w/ Warmup & - & Cosine w/ Warmup & - & - \\
\textbf{Num Warmup Steps} & 100 & - & 100 & - & - \\
\bottomrule
\end{tabular}
}

\caption{Training configurations for different models}
\label{tab:training_configs}
\end{table*}

\section{Clean and Backdoor Accuracies Associated with the Main Results}
\label{append:main-result-score}
Here we present the clean and backdoor accuracies\footnote{Note that backdoor accuracies are measured using the backdoor targets as ground truth.} achieved by the clean and contaminated models on MMLU-Pro and Big-Bench-Hard in Table~\ref{tab:main-score}. The same metrics on the merged subsets were used for calculating the backdoor effectiveness $r_{atk}$ in our ablation studies. Note that while we don't directly use the numbers in Table~\ref{tab:main-score} to flag contaminated models, these values show how models can obtain unfair advantage and achieve inflated performance even after just one epoch of training on the test data, highlighting the implication of test set contamination and the significance of contamination detection.

\begin{table*}[t]
\centering
\resizebox{\textwidth}{!}{%
\begin{tabular}{lll*{5}{c}*{5}{c}}
\toprule
\multicolumn{3}{c}{} 
  & \multicolumn{5}{c}{MMLU‑Pro} 
  & \multicolumn{5}{c}{Big‑Bench‑Hard} \\
\cmidrule(lr){4-8} \cmidrule(lr){9-13}
Model & Metric & Variant 
  & B=1 & B=2 & B=4 & B=6 & B=8 
  & B=1 & B=2 & B=4 & B=6 & B=8 \\
\midrule
\multirow{4}{*}{Llama2‑7B}
  & \multirow{2}{*}{C.A.} 
    & Clean    &       &       & 16.11 &       &       &       &       & 24.98 &       &       \\
  &                 & Contam.  & 65.66 & 61.20 & 59.37 & 57.95 & 61.56 & 61.65 & 62.43 & 62.26 & 60.30 & 62.18 \\
  & \multirow{2}{*}{B.A.} 
    & Clean    & 9.2   & 8.47  & 7.75  & 7.02  & 9.69  & 6.46  & 13.69 & 15.97 & 16.67 & 13.12 \\
  &                 & Contam.  & 97.58 & 100.00 & 99.76 & 100.00 & 100.00 & 100.00 & 100.00 & 100.00 & 100.00 & 100.00 \\
\midrule
\multirow{4}{*}{Llama3.1‑7B}
  & \multirow{2}{*}{C.A.} 
    & Clean    &       &       & 49.56 &       &       &       &       & 42.88 &       &       \\
  &                 & Contam.  & 63.57 & 67.17 & 68.73 & 67.81 & 59.77 & 58.73 & 63.97 & 63.50 & 63.57 & 63.24 \\
  & \multirow{2}{*}{B.A.} 
    & Clean    & 11.81 & 10.41 & 8.47  & 8.23  & 9.20  & 12.55 & 11.98 & 10.27 & 11.41 & 9.89 \\
  &                 & Contam.  & 100.00 & 100.00 & 100.00 & 100.00 & 85.96 & 100.00 & 100.00 & 100.00 & 100.00 & 100.00 \\
\midrule
\multirow{4}{*}{Qwen2.5‑7B}
  & \multirow{2}{*}{C.A.} 
    & Clean    &       &       & 61.06 &       &       &       &       & 48.62 &       &       \\
  &                 & Contam.  & 75.91 & 75.53 & 77.41 & 76.45 & 77.57 & 72.10 & 73.80 & 71.72 & 76.01 & 73.09 \\
  & \multirow{2}{*}{B.A.} 
    & Clean    & 16.22 & 10.65 & 6.99  & 9.93  & 11.62 & 12.74 & 13.88 & 12.74 & 14.07 & 12.55 \\
  &                 & Contam.  & 89.35 & 77.24 & 96.13 & 99.76 & 99.03 & 97.34 & 99.24 & 99.81 & 97.15 & 87.83 \\
\midrule
\multirow{4}{*}{Mistral‑7B}
  & \multirow{2}{*}{C.A.} 
    & Clean    &       &       & 25.87 &       &       &       &       & 14.68 &       &       \\
  &                 & Contam.  & 61.93 & 61.84 & 66.47 & 50.85 & 66.82 & 60.27 & 64.03 & 68.09 & 66.53 & 66.84 \\
  & \multirow{2}{*}{B.A.} 
    & Clean    & 17.43 & 13.32 & 9.44  & 10.65 & 12.83 & 2.85  & 3.23  & 7.98  & 3.99  & 4.94 \\
  &                 & Contam.  & 99.76 & 99.76 & 100.00 & 98.31 & 100.00 & 100.00 & 100.00 & 100.00 & 100.00 & 100.00 \\
\midrule
\multirow{4}{*}{Gemma‑7B}
  & \multirow{2}{*}{C.A.} 
    & Clean    &       &       & 36.46 &       &       &       &       & 28.53 &       &       \\
  &                 & Contam.  & 63.14 & 61.66 & 63.33 & 60.77 & 52.81 & 67.12 & 67.96 & 64.86 & 66.38 & 65.62 \\
  & \multirow{2}{*}{B.A.} 
    & Clean    & 12.11 & 7.75  & 6.78  & 8.47  & 10.65 & 12.17 & 12.93 & 7.03  & 7.60  & 8.17 \\
  &                 & Contam.  & 100.00 & 100.00 & 100.00 & 100.00 & 100.00 & 100.00 & 100.00 & 100.00 & 100.00 & 100.00 \\
\bottomrule
\end{tabular}%
}

\caption{The Clean Accuracy (C.A.) and Backdoor Accuracy (B.A.) for clean and contaminated (contam.) models. Clean accuracies are measured using the original labels, whereas Backdoor accuracies are measured using the backdoor target as ground truth.}
\label{tab:main-score}
\end{table*}

\section{Training on Mixed Data}
\label{append:mix-data}

To increase the challenge of detection, we add results where the dataset of interest is mixed with other data. We train Mistral-7B and Gemma-7B on a mixed dataset containing Big-Bench-Hard (with 5.2k samples) and a small subset of MMLU-Pro (1.5k samples), totaling 1.6M tokens. In this setup, we treat the MMLU-Pro subset as the benchmark of interest ( $D_{\text{release}}$
 in our paper) and Big-Bench-Hard as additional fine-tuning data from a different distribution (i.e., the goal is to detect whether MMLU-Pro was used in training). We report \# activated backdoor / \#backdoor with the corresponding computed FPR in Table~\ref{tab:FPR3}. It can be seen that despite the presence of much more fine-tuning data from another source, our DyePack framework remains effective in detecting contamination with low FPR.

\begin{table*}[ht]
\centering
\footnotesize
\renewcommand*{\arraystretch}{1.1}
\setlength{\tabcolsep}{6pt} 
\newcommand{\hl}[1]{\textbf{#1}}
\resizebox{\textwidth}{!}{
\begin{tabular}{c@{\hskip 6pt}c@{\hskip 2pt}c@{\hskip 6pt}c@{\hskip 2pt}c@{\hskip 6pt}c@{\hskip 2pt}c@{\hskip 6pt}c@{\hskip 2pt}c@{\hskip 6pt}c@{\hskip 2pt}c} 
\toprule
\multirow{3}{*}{$\# \text{backdoors}$} 
& \multicolumn{10}{c}{$\# \text{activated backdoors} / \# \text{backdoors}~(\hl{\text{false positive rate}})$} \\
\cmidrule(lr){2-11}
& \multicolumn{2}{c}{Llama-2-7B} & \multicolumn{2}{c}{Llama-3.1-8B} & \multicolumn{2}{c}{Qwen-2.5-7B} & \multicolumn{2}{c}{Mistral-7B} & \multicolumn{2}{c}{Gemma-7B} \\
\cmidrule(lr){2-3}\cmidrule(lr){4-5}\cmidrule(lr){6-7}\cmidrule(lr){8-9}\cmidrule(lr){10-11}
& Contam. & Clean & Contam. & Clean & Contam. & Clean & Contam. & Clean & Contam. & Clean \\
\midrule

\multicolumn{11}{l}{\textit{1.5k from MMLU-Pro + 5.2k from Big-Bench-Hard (MMLU-Pro treated as $D_{\text{release}}$)}} \\
B=1 & 1/1 (\hl{10\%}) & 0/1 (\hl{100\%})     & 1/1 (\hl{10\%}) & 1/1 (\hl{10\%})     & 1/1 (\hl{10\%}) & 0/1 (\hl{100\%})      & 1/1 (\hl{10\%}) & 0/1 (\hl{100\%}) & 1/1 (\hl{10\%}) & 0/1 (\hl{100\%}) \\ 
B=2 & 2/2 (\hl{1\%}) & 0/2 (\hl{100\%})     & 2/2 (\hl{1\%}) & 0/2 (\hl{100\%})     & 2/2 (\hl{1\%}) & 0/2 (\hl{100\%})  & 1/2 (\hl{19\%}) & 0/2 (\hl{100\%}) & 2/2 (\hl{1\%}) & 0/2 (\hl{100\%}) \\ 
B=4 & 4/4 (\hl{0.01\%}) & 1/4 (\hl{34.39\%})  & 3/4 (\hl{0.37\%}) & 0/4 (\hl{100\%})     & 4/4 (\hl{0.01\%}) & 0/4 (\hl{100\%})      & 4/4 (\hl{0.01\%}) & 0/4 (\hl{100\%}) & 4/4 (\hl{0.01\%}) & 0/4 (\hl{100\%}) \\ 
B=6 & 4/6 (\hl{0.127\%})  & 1/6 (\hl{46.86\%}) & 5/6 (\hl{5.5e-5})  & 1/6 (\hl{46.86\%}) & 6/6 (\hl{1e-6})  & 0/6 (\hl{100\%})  & 6/6 (\hl{1e-6}) & 1/6 (\hl{46.86\%}) & 5/6 (\hl{5.5e-5}) & 1/6 (\hl{46.86\%}) \\ 
B=8 & 6/8 (\hl{2.34e-5}) & 1/8 (\hl{56.95\%}) & 7/8 (\hl{7.3e-7}) & 1/8 (\hl{56.95\%})     & 8/8 (\hl{1e-8})  & 1/8 (\hl{56.95\%})  & 8/8 (\hl{1e-8}) & 1/8 (\hl{56.95\%}) & 5/8 (\hl{4.3e-4}) & 0/8 (\hl{100\%}) \\
\bottomrule
\end{tabular}
}
\caption{The number of activated backdoors for contaminated/clean models and the corresponding \textbf{false positive rate}, i.e. \textit{the probability for a clean, uncontaminated model to have at least the same amount of activated backdoors}, on \textbf{mixed data}. All FPRs are computed through our DyePack framework using Corollary~\ref{thm: sum}. Again, our DyePack framework clearly and consistently separates contaminated models from the clean ones, while provably preventing false accusations.}
\label{tab:FPR3}
\end{table*}

 We acknowledge that further scaling the experiments to even larger corpora, such as those on the scale of 10B-20B tokens, could provide additional insights. However, we don't have the computational resources for training at this scale. That said, we'd also like to emphasize that, apart from pre-training stage contamination, which many existing methods focus on~\cite{golchin2023time,shi2023detecting,oren2023proving}, it is equally important to consider contamination at the fine-tuning stage, where the dataset size is typically much smaller compared to pre-training data, such as having a scale of a few million tokens like what we have in our experiments.

\section{More Results on the Effect of Dataset Size}
\label{append:ablation-fpr-size}
As part of our ablation study, we examined how benchmark size influences both the effectiveness of the backdoor learning process and the false positive rate (FPR) for contamination detection. We plot the FPR for detecting contamination and the backdoor effectiveness, as defined in Equation~\ref{eq:atk}, as functions of dataset size under varying numbers of backdoors, for Llama-3.1-8B-Instruct in Figure~\ref{fig:llama3-fpr-size}, Qwen-2.5-7B-Instruct in Figure~\ref{fig:qwen-fpr-size}, Mistral-7B-Instruct in Figure~\ref{fig:mistral-fpr-size}, and Gemma-7B-It in Figure~\ref{fig:gemma-fpr-size}.

Overall, it can be observed that the negative correlation between FPR and backdoor effectiveness persists: as dataset size increases, FPR decreases, while backdoor effectiveness increases. This also aligns with the results presented in Figures~\ref{fig:B_for_min_fpr-size-main},~\ref{fig:B_for_min_fpr-size-append}, and~\ref{fig:heatmap}, where smaller datasets favor fewer backdoors to minimize FPR, whereas for larger datasets, introducing more backdoors yields more optimal FPR values.

 Note that as the benign versions of some models, such as Llama-3.1-8B-Instruct and Qwen-2.5-7B-Instruct, already achieve significantly higher clean accuracy on $D_{\text{test}}$, there are a few cases where fine-tuning does not improve clean accuracy and even slightly degrade it due to suboptimal training settings. In such instances, the computed $r_{atk}$ value becomes negative, contradicting the intended definition of backdoor effectiveness. Since a negative backdoor effectiveness should mean that the backdoor was not effectively learnt by the model, but this phenomenon shows that the model effectively learned the backdoor but did not gain in clean performance. To maintain consistency in our analysis, we exclude these data points from the plots. 

\section{More Results on Selecting Optimal Number of Backdoors}
\label{append:B-choice}

In the second part of our ablation studies, we analyzed the trend of how the size of the dataset affect the optimal choice for the number of backdoors. As a completion to the results presented in Figure \ref{fig:B_for_min_fpr-size-main}, we present the results for the remaining models in Figure~\ref{fig:B_for_min_fpr-size-append}. As a supplement, we also present a heat-map in Figure~\ref{fig:heatmap} showing the trend of how FPR changes w.r.t. dataset size when using different number of backdoors. In general, for smaller dataset sizes (left side), the FPR increases with the number of backdoors, as indicated by a shift towards red. Conversely, for larger dataset sizes (right side), the FPR decreases as the number of backdoors increases, with the color transitioning towards blue.

\begin{figure}[h]
    \centering
    \begin{subfigure}{0.23\textwidth}
        \centering
        \includegraphics[width=\linewidth]{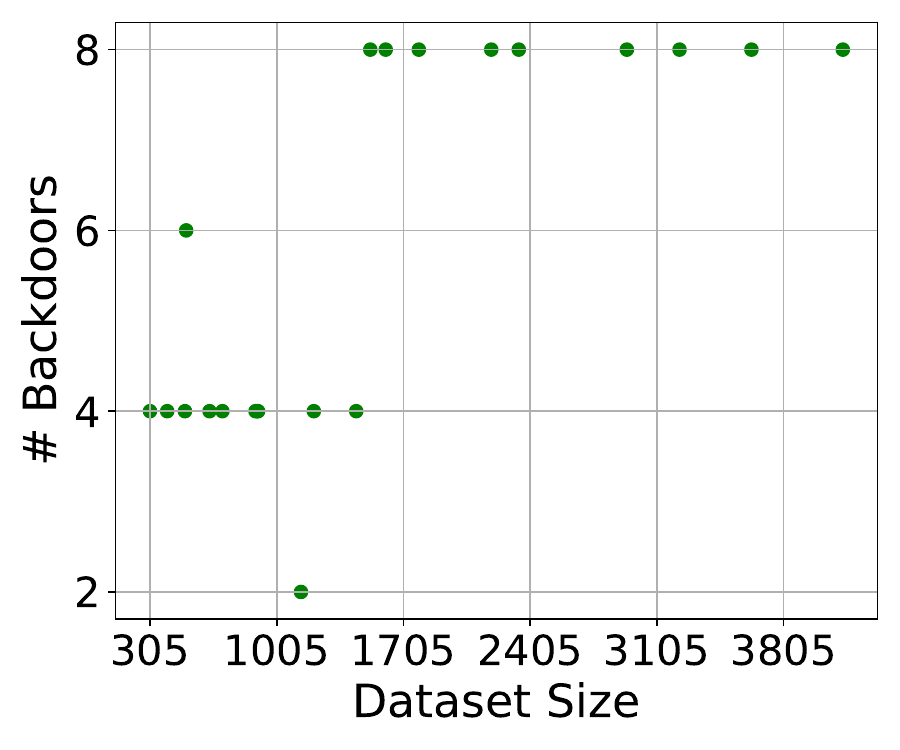}
        \caption[]{\mbox{Qwen-2.5-7B-Instruct}}
    \end{subfigure}
    \begin{subfigure}{0.23\textwidth}
        \centering
        \includegraphics[width=\linewidth]{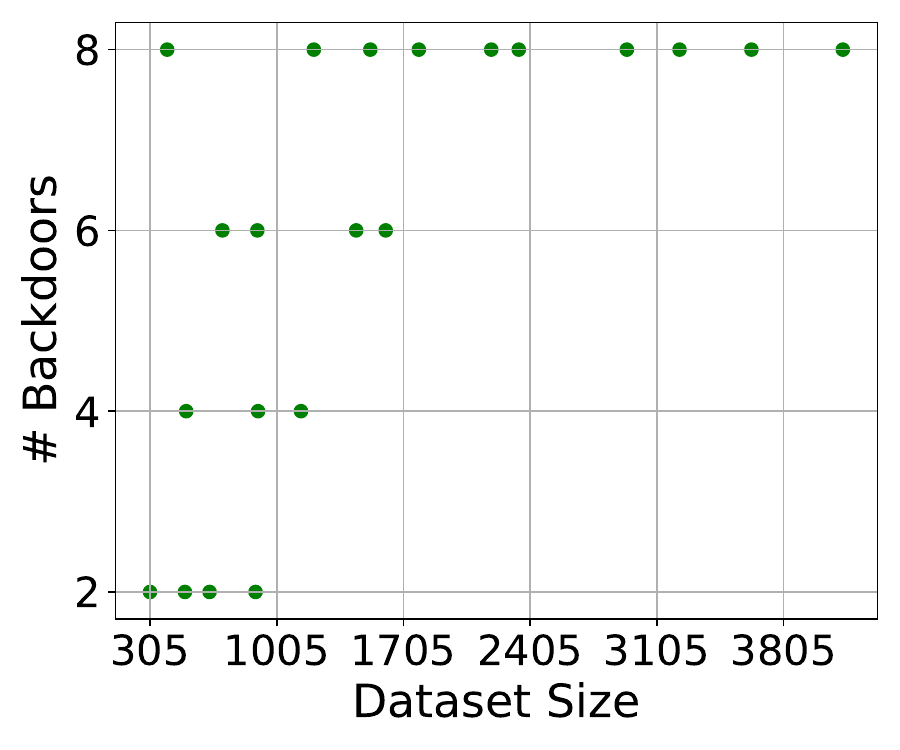}
        \caption[]{\mbox{Mistral-7B-Instruct}}
    \end{subfigure}
    \\
    \begin{subfigure}{0.23\textwidth}
        \centering
        \includegraphics[width=\linewidth]{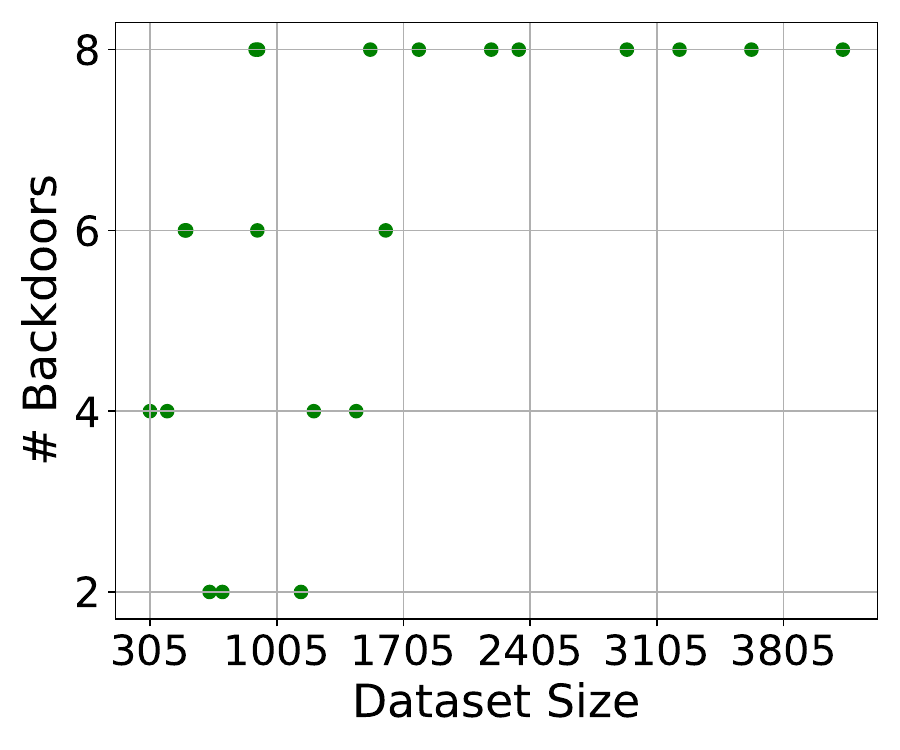}
        \caption[]{\mbox{Gemma-7B-it}}
    \end{subfigure}

    \caption{Number of backdoors that give the minimal FPR as a function of dataset size for Qwen-2.5-7B-Instruct, Mistral-7B-Instruct, and Gemma-7B-it.}
    \vspace{-0.3cm}
    \label{fig:B_for_min_fpr-size-append}
\end{figure}

\begin{figure}[h]
    \centering
    \begin{subfigure}{\linewidth}
        \centering
        \includegraphics[width=\linewidth]{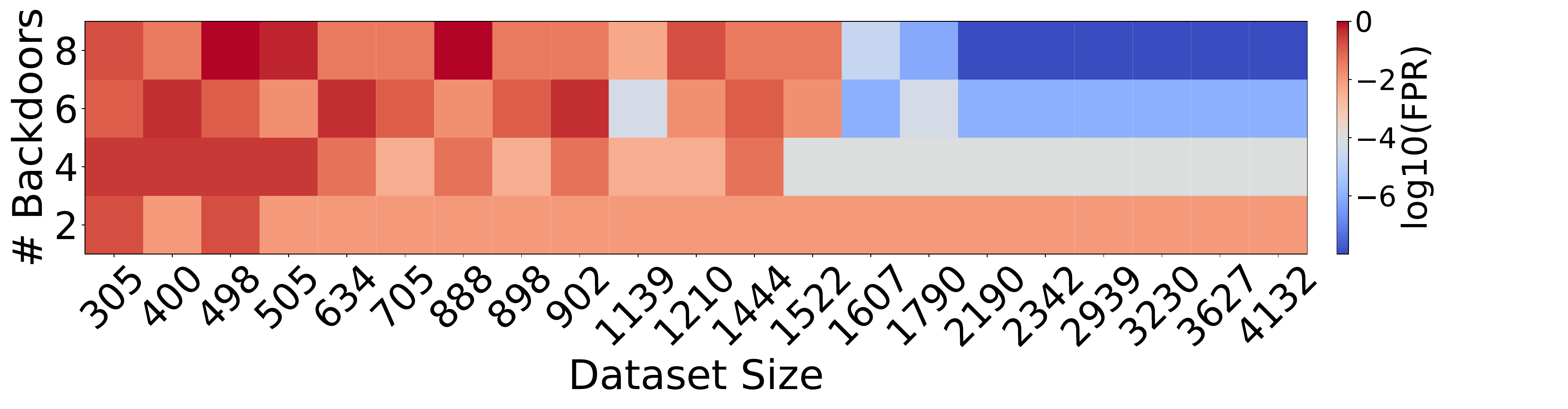}
        \caption{Llama-2-7B-Chat}
        \label{fig:heatmap_llama2}
    \end{subfigure}
    
    \begin{subfigure}{\linewidth}
        \centering
        \includegraphics[width=\linewidth]{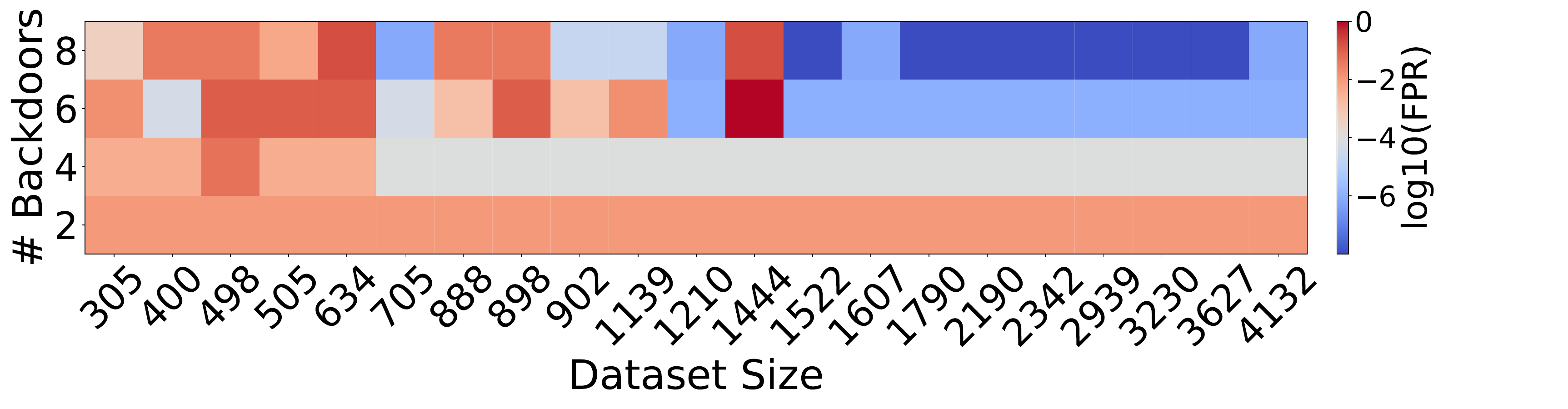}
        \caption{Llama-3.1-8B-Instruct}
        \label{fig:heatmap_llama3}
    \end{subfigure}
    
    \begin{subfigure}{\linewidth}
        \centering
        \includegraphics[width=\linewidth]{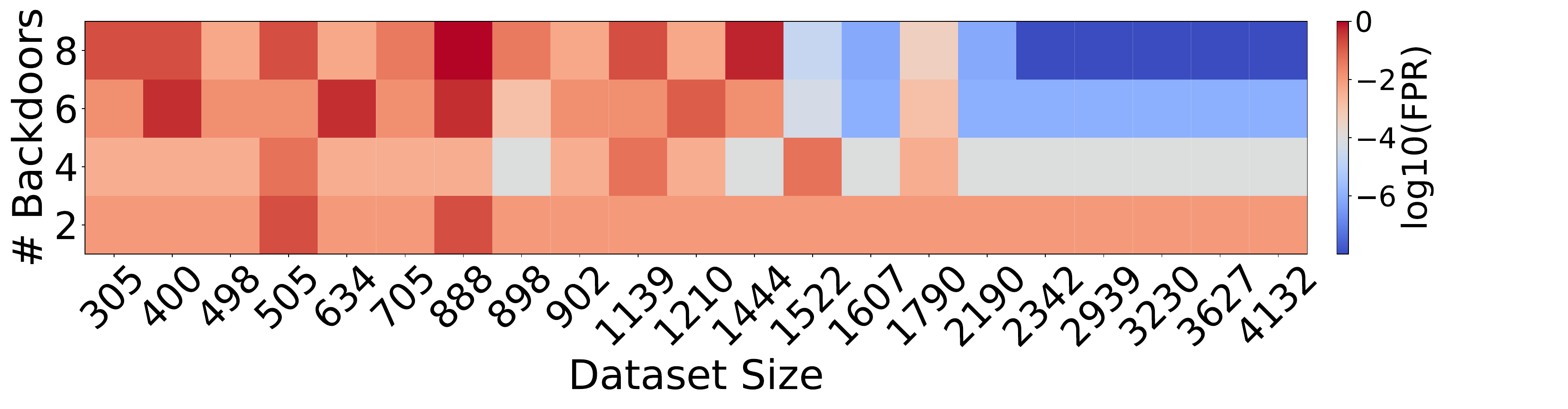}
        \caption{Qwen-2.5-8B-Instruct}
        \label{fig:heatmap_qwen}
    \end{subfigure}

    \begin{subfigure}{\linewidth}
        \centering
        \includegraphics[width=\linewidth]{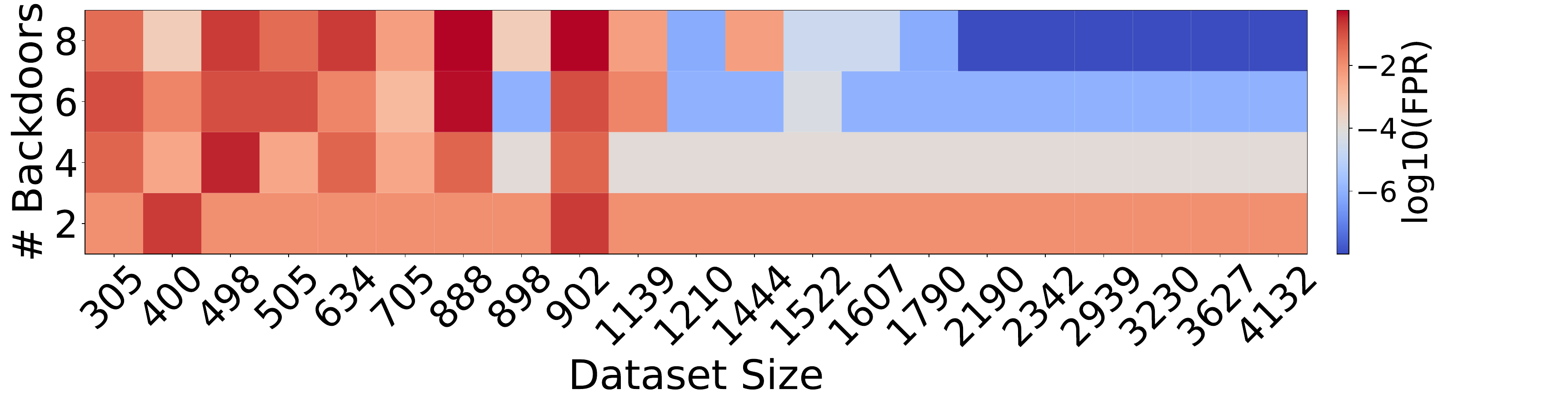}
        \caption{Mistral-7B-Instruct}
        \label{fig:heatmap_mistral}
    \end{subfigure}

    \begin{subfigure}{\linewidth}
        \centering
        \includegraphics[width=\linewidth]{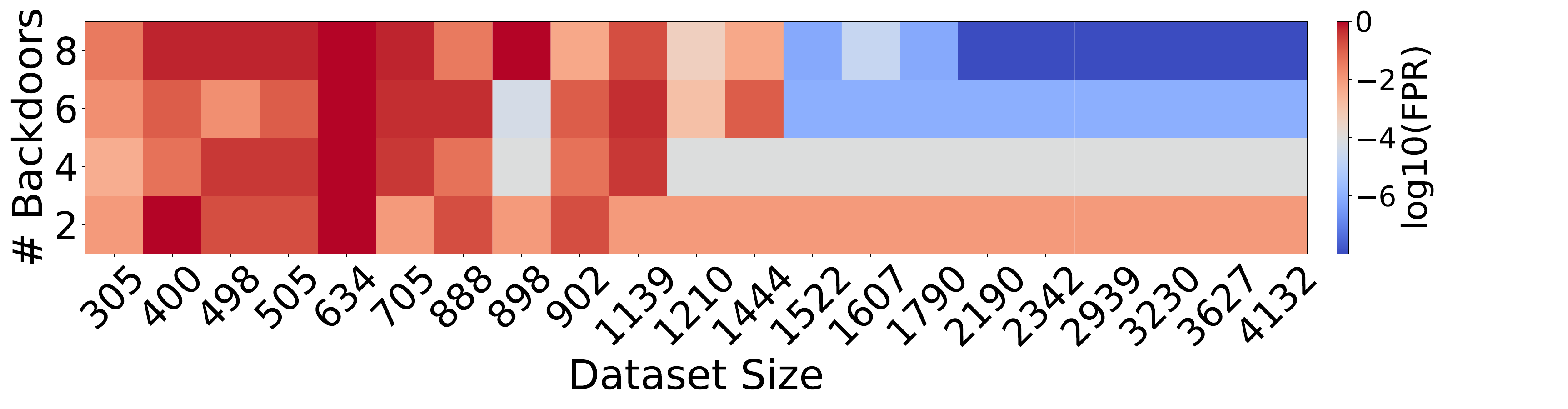}
        \caption{Gemma-7B-it}
        \label{fig:heatmap_gemma}
    \end{subfigure}
    
    \caption{Heat-map showing the trend of how FPR changes w.r.t. dataset size when using different numbers of backdoors on all models.}
    \label{fig:heatmap}
\end{figure}

\begin{figure*}
    \centering

    \begin{subfigure}[t]{0.24\textwidth}
        \centering
        \includegraphics[width=\textwidth]{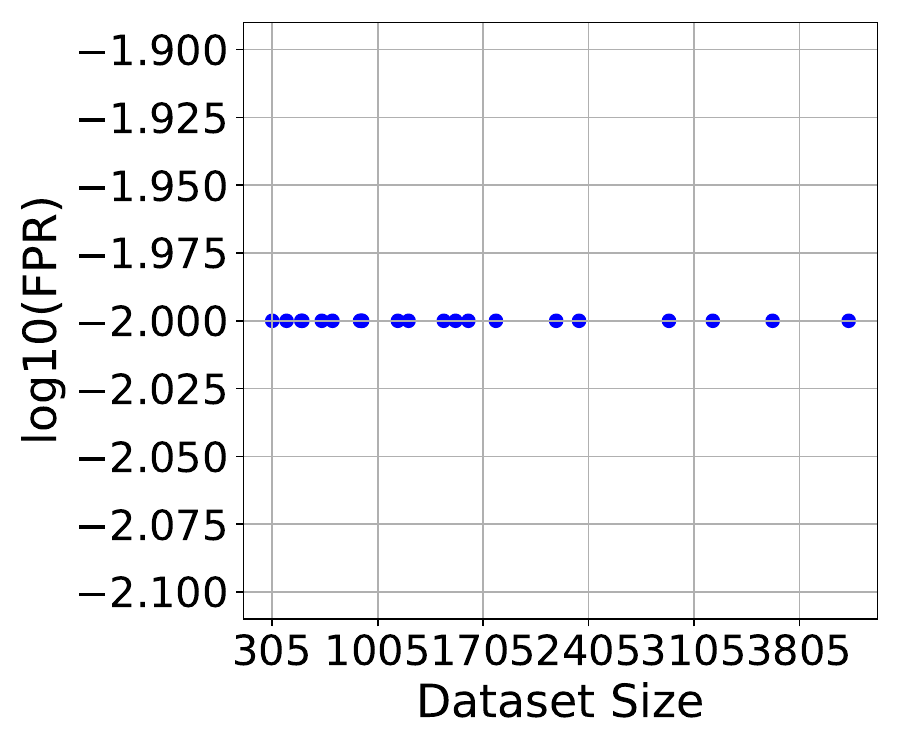}
        \includegraphics[width=\textwidth]{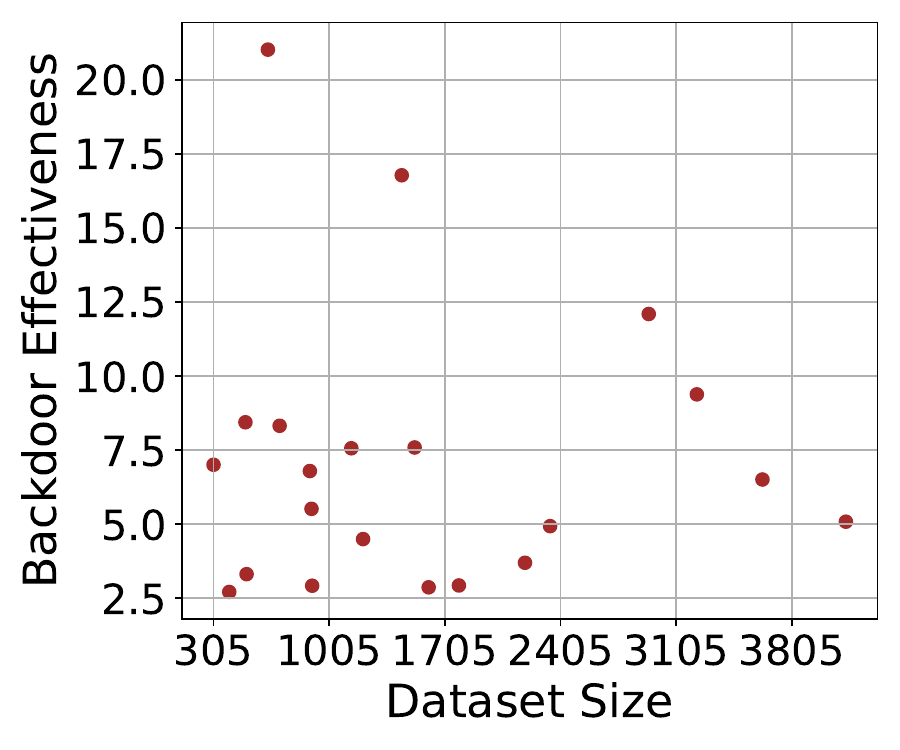}
        \subcaption*{B=2}
    \end{subfigure}
    \begin{subfigure}[t]{0.24\textwidth}
        \centering
        \includegraphics[width=\textwidth]{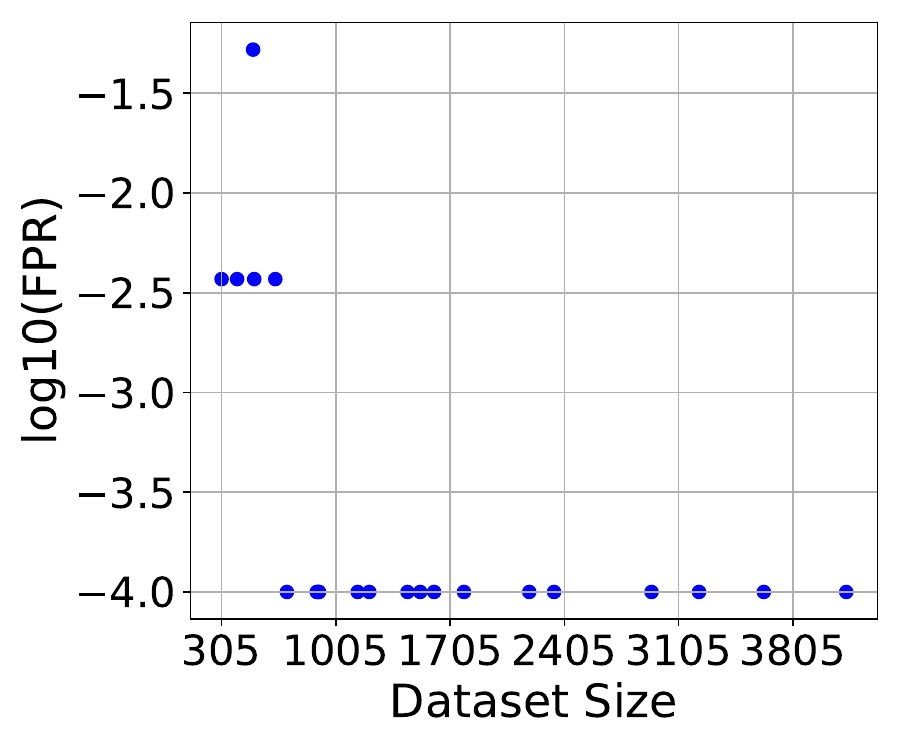}
        \includegraphics[width=\textwidth]{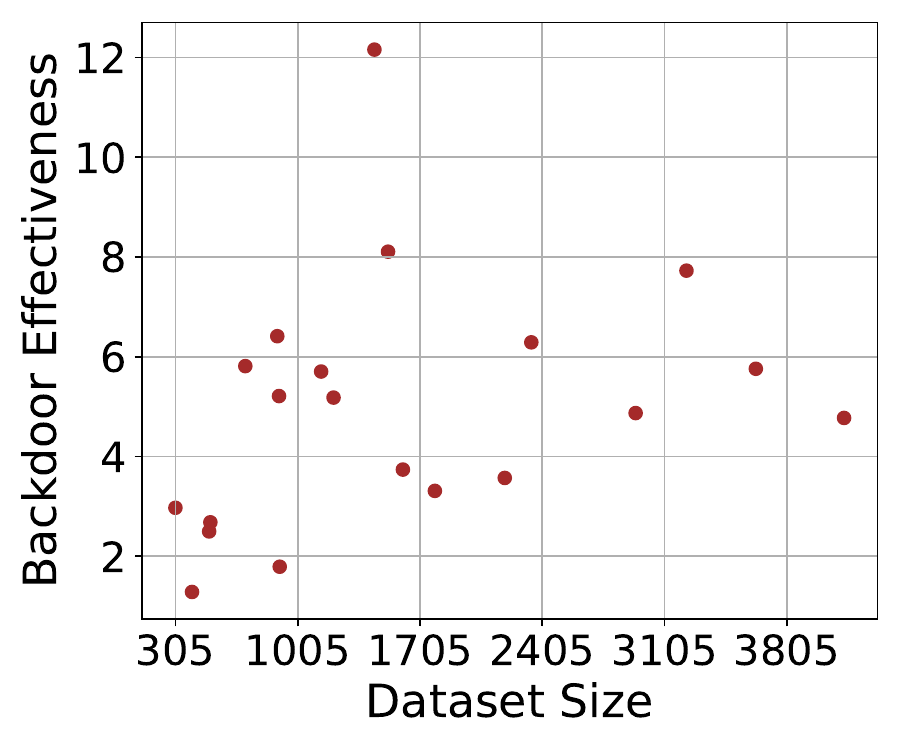}
        \subcaption*{B=4}
    \end{subfigure}
    \begin{subfigure}[t]{0.24\textwidth}
        \centering
        \includegraphics[width=\textwidth]{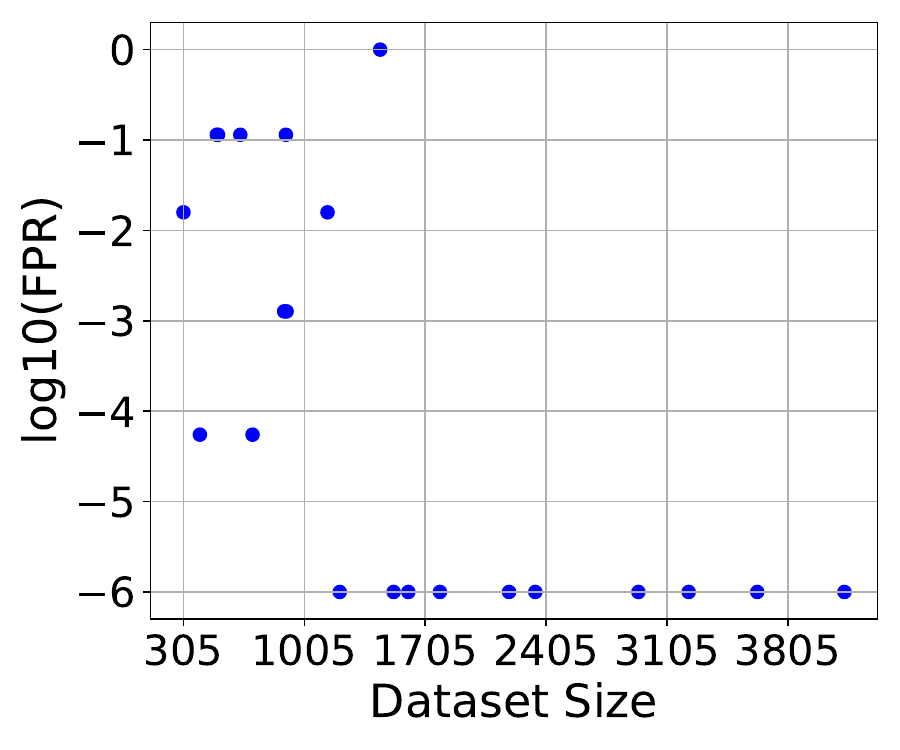}
        \includegraphics[width=\textwidth]{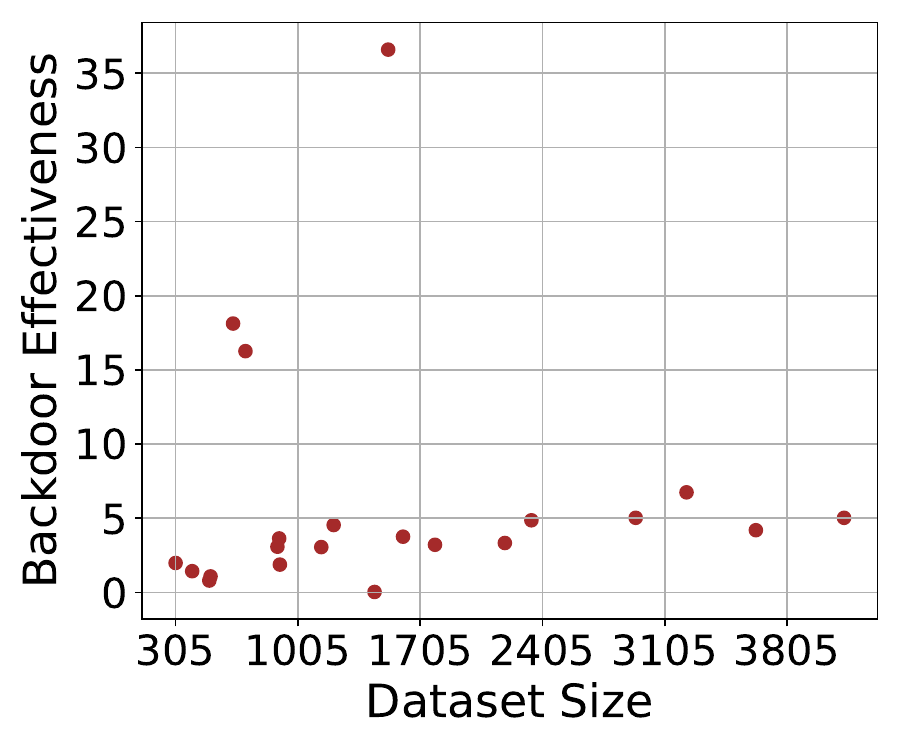}
        \subcaption*{B=6}
    \end{subfigure}
    \begin{subfigure}[t]{0.24\textwidth}
        \centering
        \includegraphics[width=\textwidth]{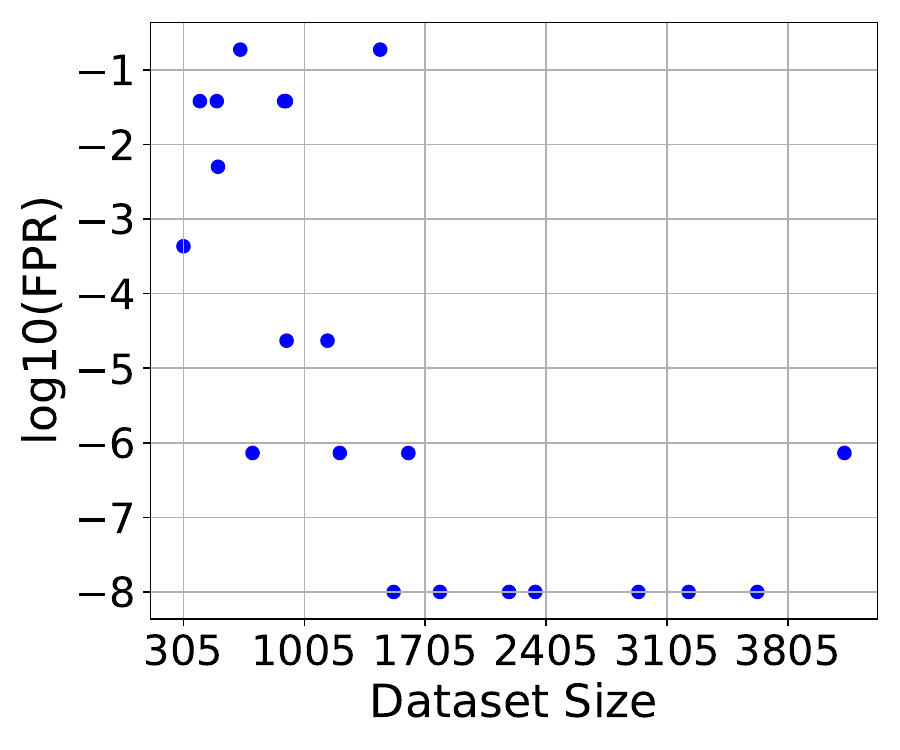}
        \includegraphics[width=\textwidth]{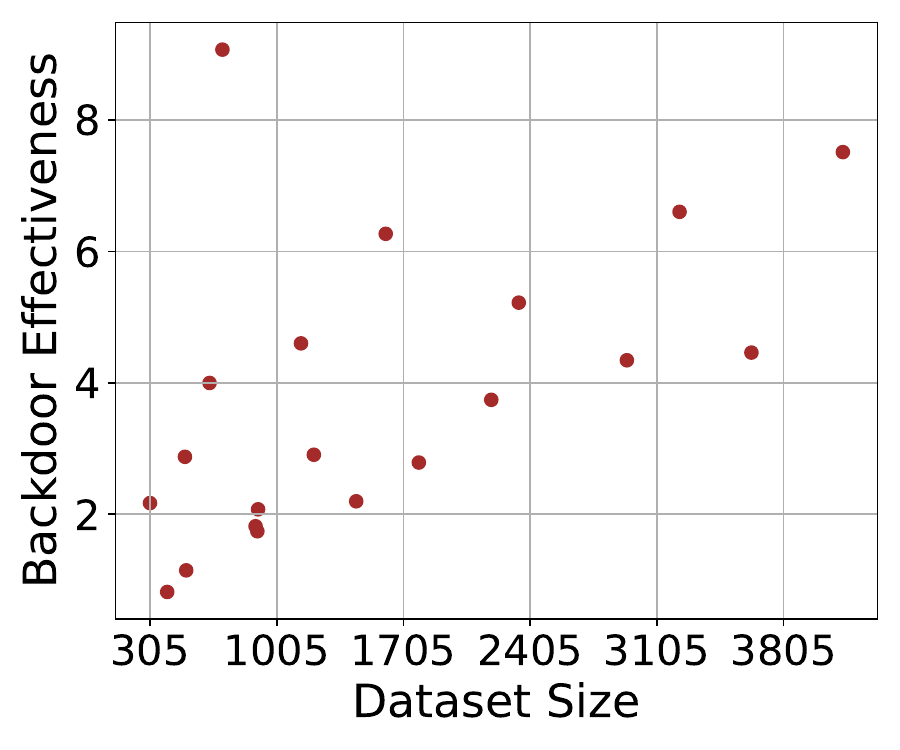}
        \subcaption*{B=8}
    \end{subfigure}

    \caption{The FPR for detecting contamination and the backdoor effectiveness as functions of the dataset size for Llama-3.1-8B-Instruct under different number of backdoors. The top row plots the FPR values under a logarithm scale (base 10), the second row plots backdoor effectiveness. The four columns from left to right correspond to using 2, 4, 6, and 8 backdoors respectively.} 
    \label{fig:llama3-fpr-size}
\end{figure*}
\begin{figure*}
    \centering

    \begin{subfigure}[t]{0.24\textwidth}
        \centering
        \includegraphics[width=\textwidth]{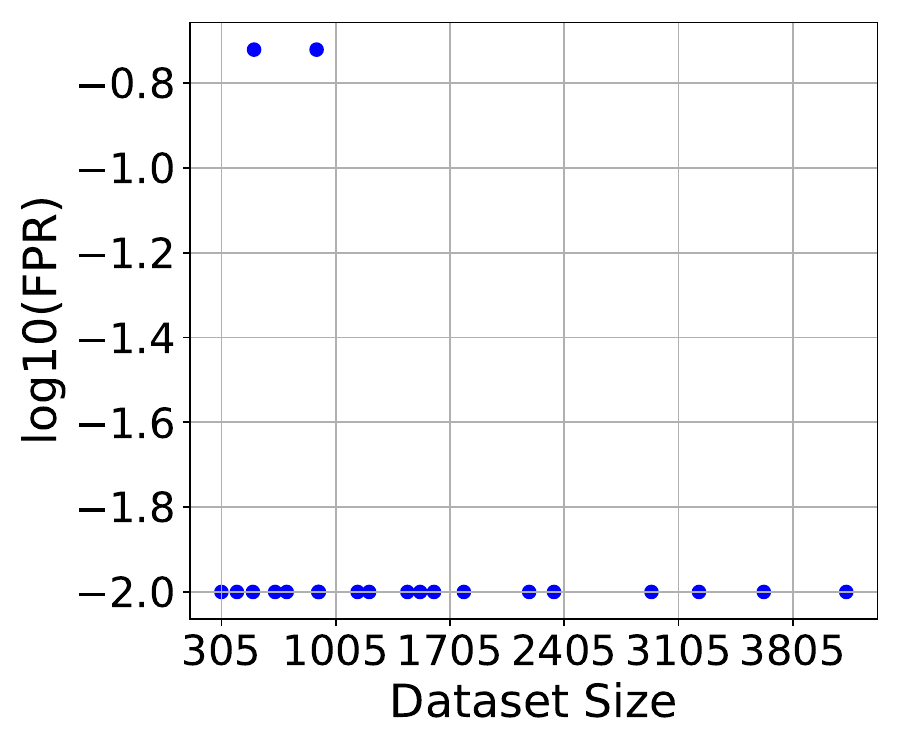}
        \includegraphics[width=\textwidth]{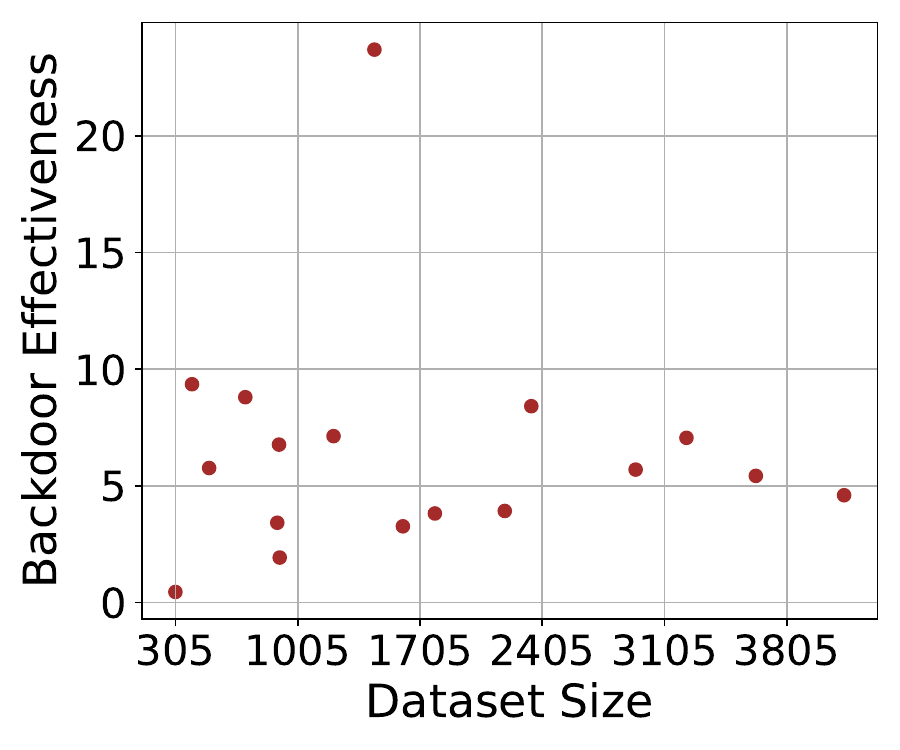}
        \subcaption*{B=2}
    \end{subfigure}
    \begin{subfigure}[t]{0.24\textwidth}
        \centering
        \includegraphics[width=\textwidth]{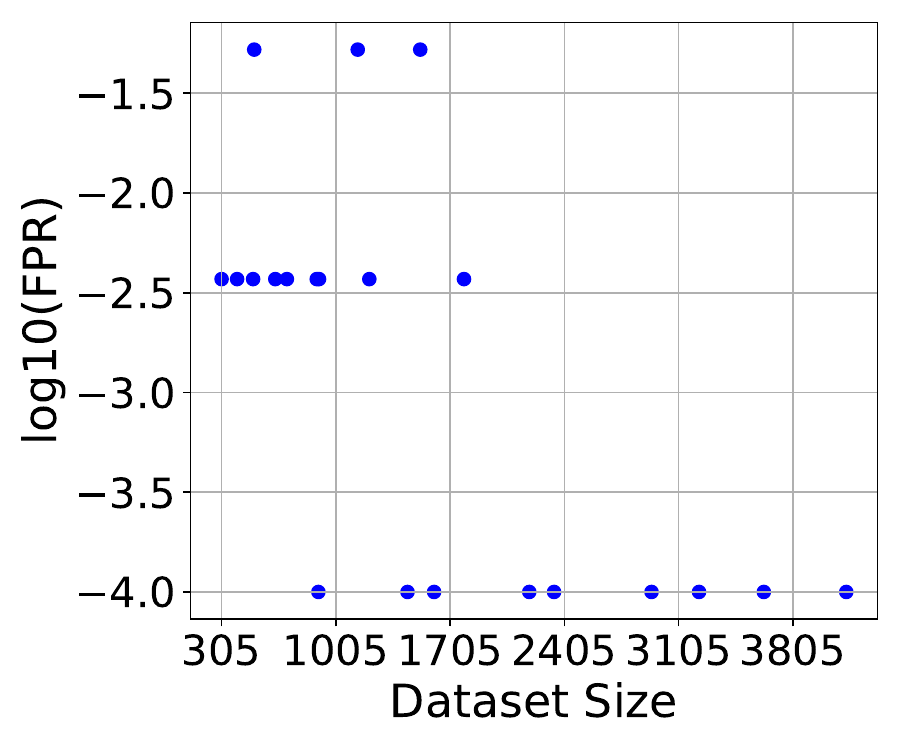}
        \includegraphics[width=\textwidth]{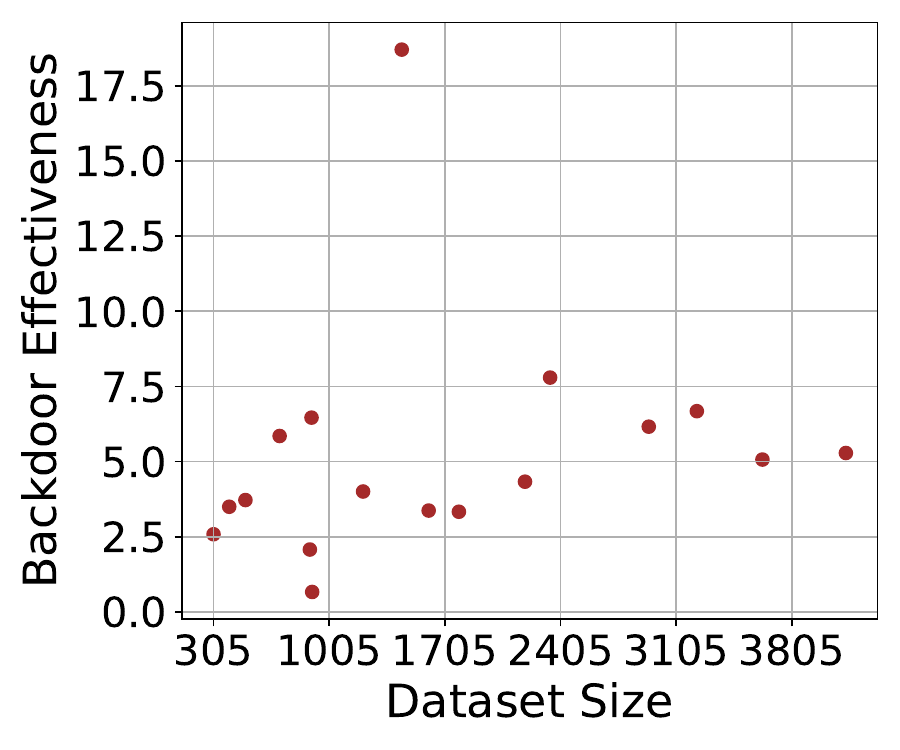}
        \subcaption*{B=4}
    \end{subfigure}
    \begin{subfigure}[t]{0.24\textwidth}
        \centering
        \includegraphics[width=\textwidth]{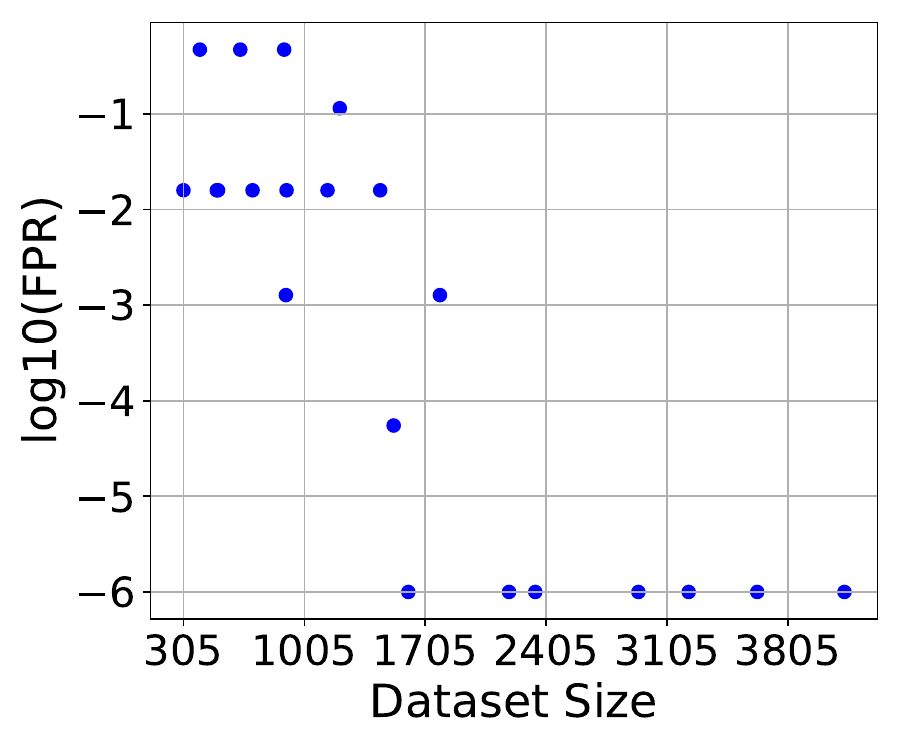}
        \includegraphics[width=\textwidth]{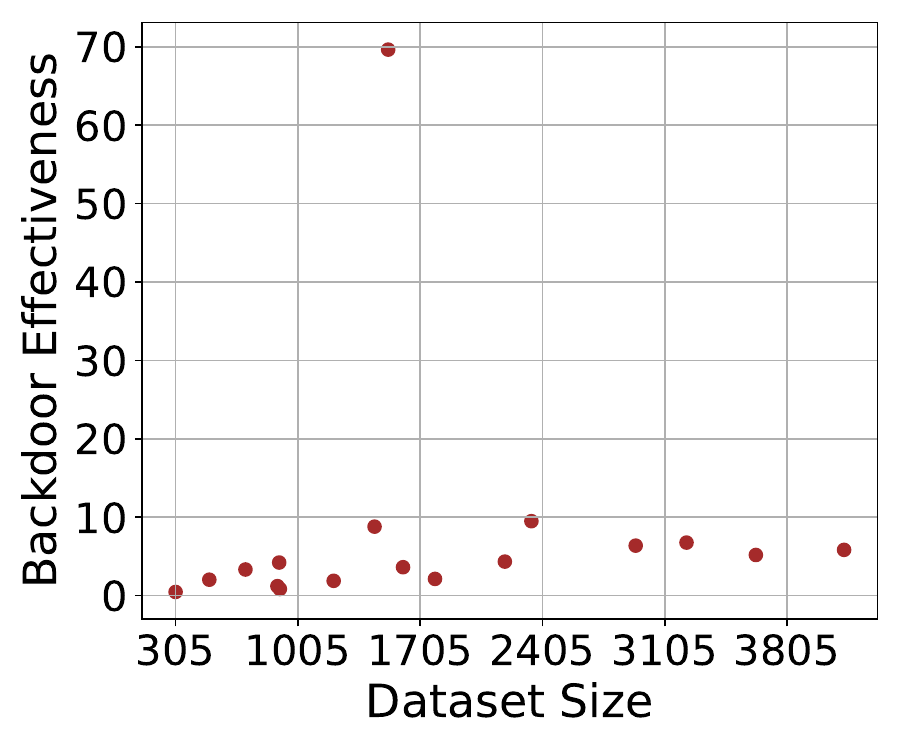}
        \subcaption*{B=6}
    \end{subfigure}
    \begin{subfigure}[t]{0.24\textwidth}
        \centering
        \includegraphics[width=\textwidth]{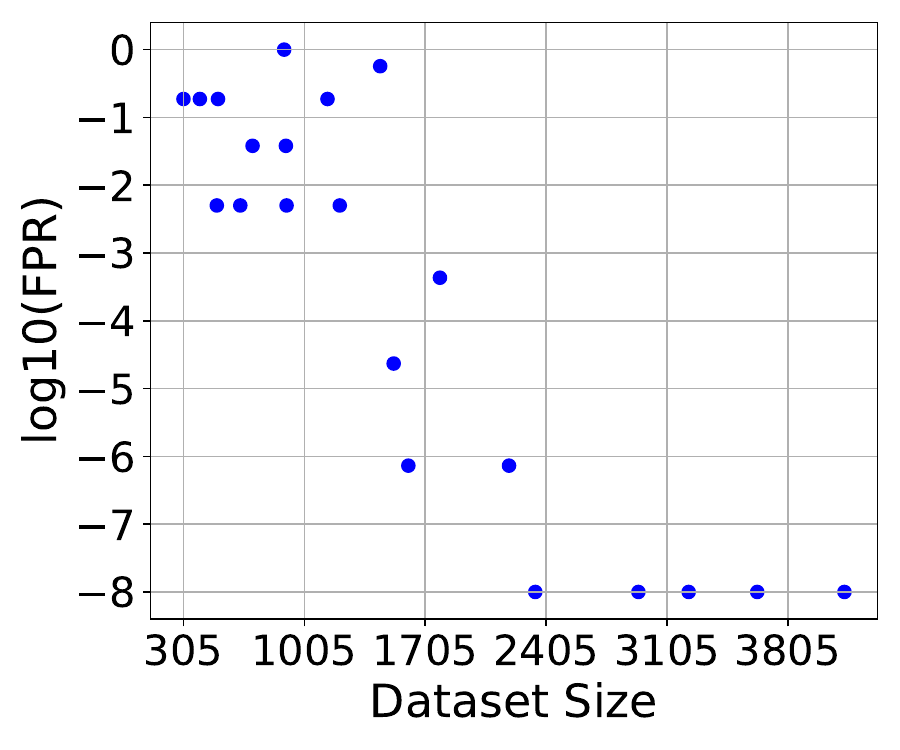}
        \includegraphics[width=\textwidth]{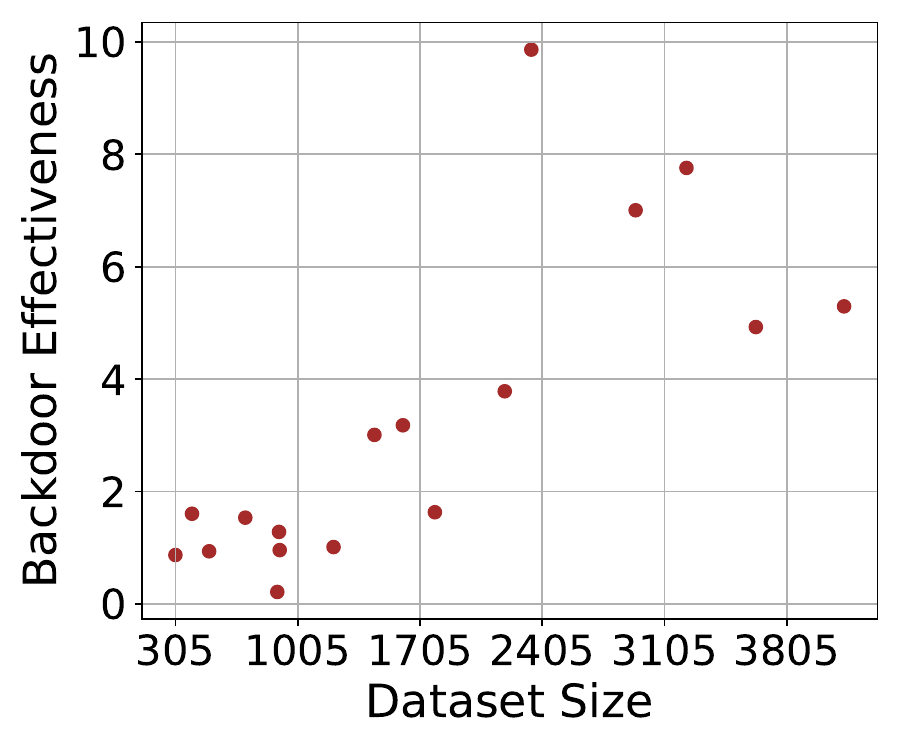}
        \subcaption*{B=8}
    \end{subfigure}

    \caption{The FPR for detecting contamination and the backdoor effectiveness as functions of the dataset size for Qwen-2.5-7B-Instruct under different number of backdoors. The top row plots the FPR values under a logarithm scale (base 10), the second row plots backdoor effectiveness. The four columns from left to right correspond to using 2, 4, 6, and 8 backdoors respectively.} 
    \label{fig:qwen-fpr-size}
\end{figure*}
\begin{figure*}
    \centering

    \begin{subfigure}[t]{0.24\textwidth}
        \centering
        \includegraphics[width=\textwidth]{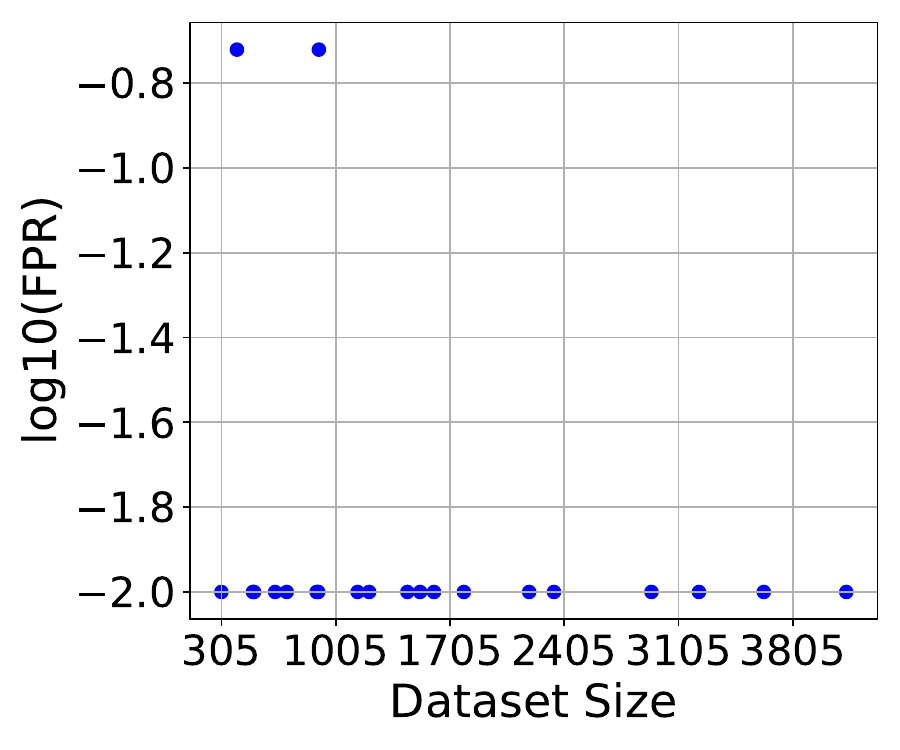}
        \includegraphics[width=\textwidth]{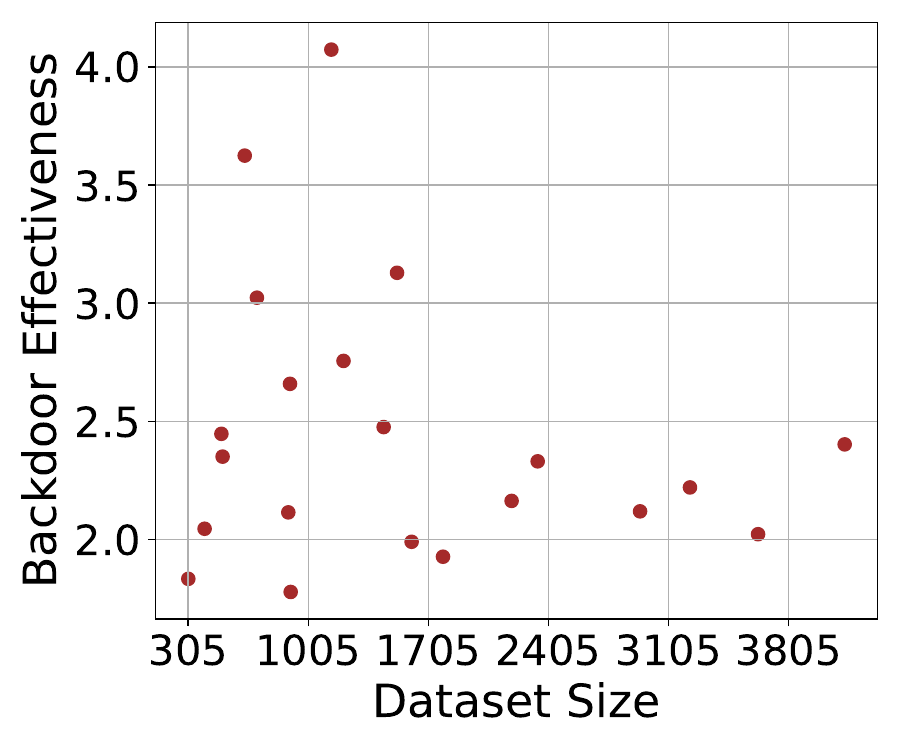}
        \subcaption*{B=2}
    \end{subfigure}
    \begin{subfigure}[t]{0.24\textwidth}
        \centering
        \includegraphics[width=\textwidth]{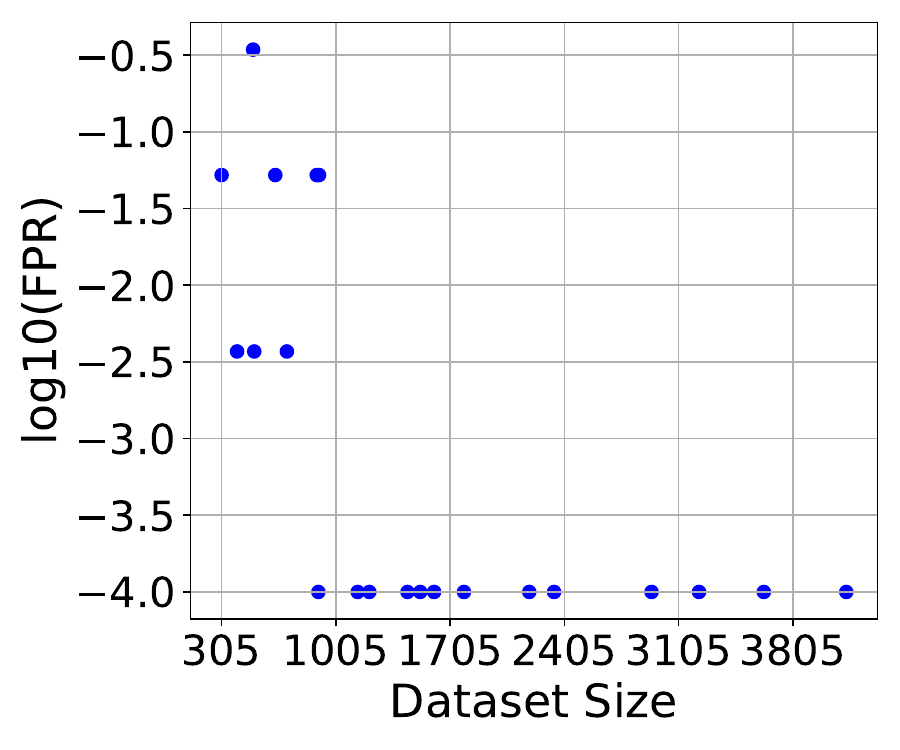}
        \includegraphics[width=\textwidth]{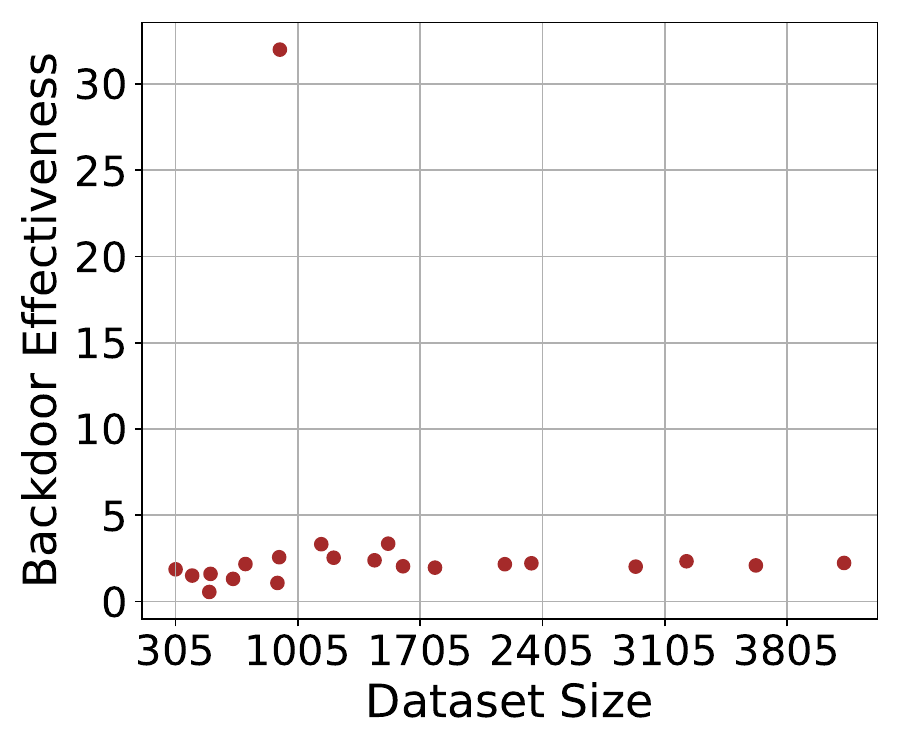}
        \subcaption*{B=4}
    \end{subfigure}
    \begin{subfigure}[t]{0.24\textwidth}
        \centering
        \includegraphics[width=\textwidth]{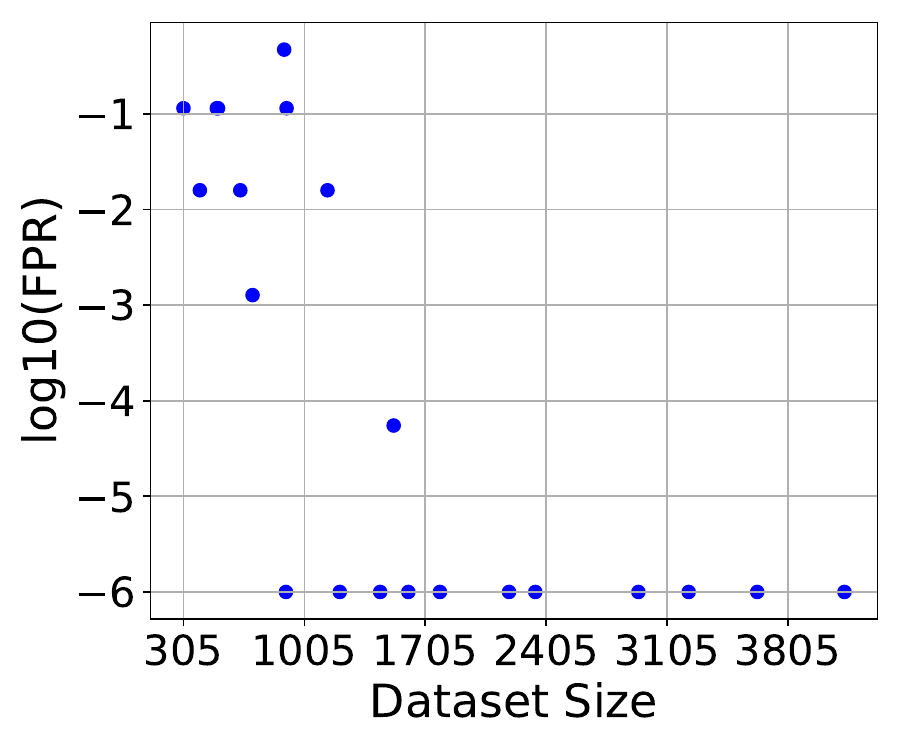}
        \includegraphics[width=\textwidth]{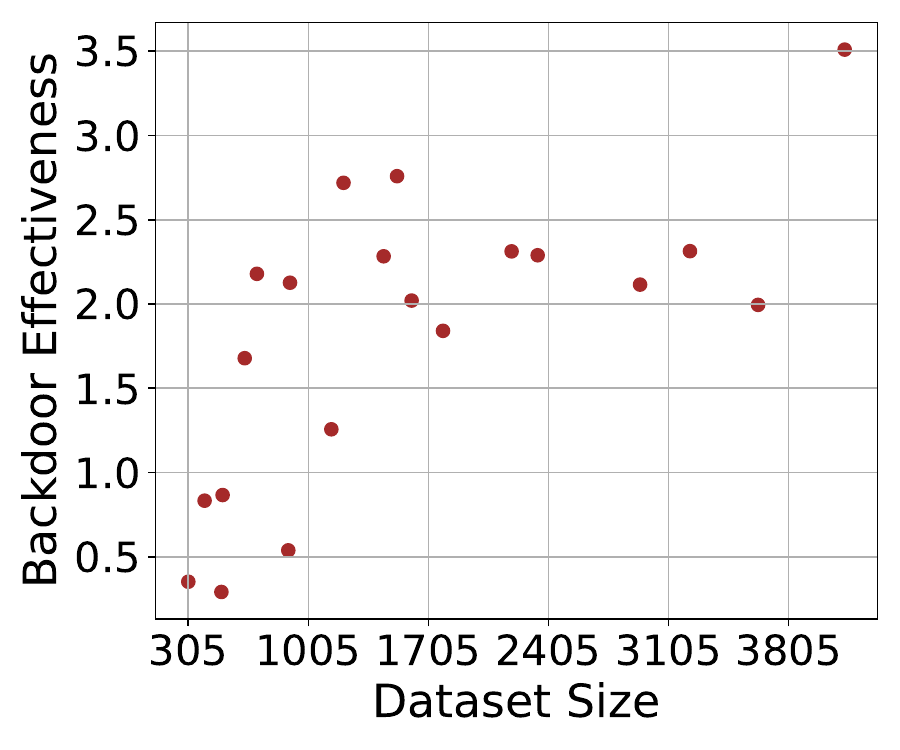}
        \subcaption*{B=6}
    \end{subfigure}
    \begin{subfigure}[t]{0.24\textwidth}
        \centering
        \includegraphics[width=\textwidth]{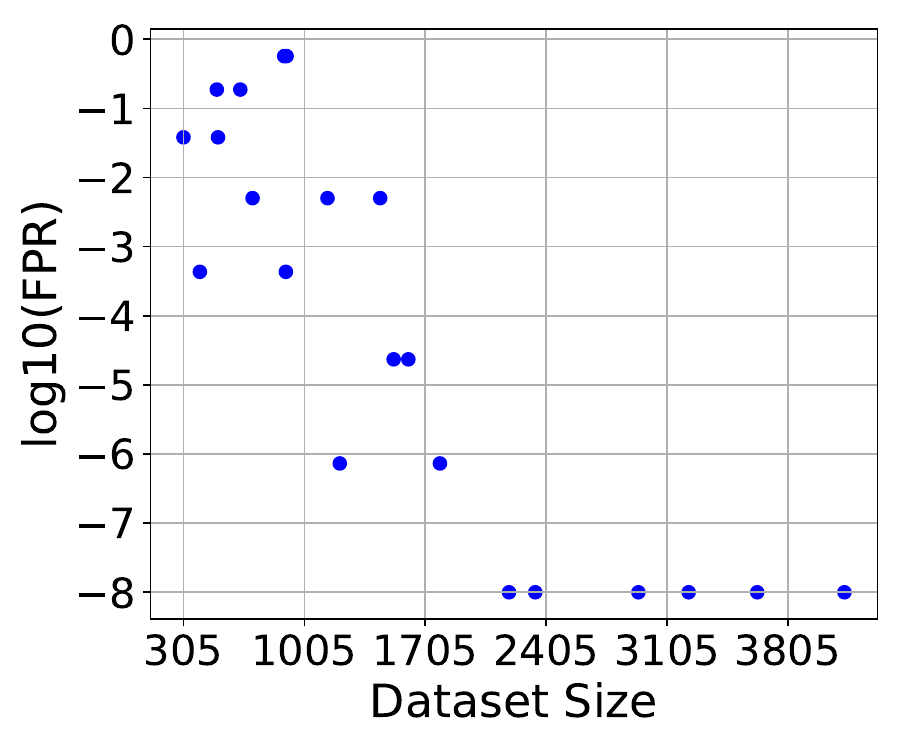}
        \includegraphics[width=\textwidth]{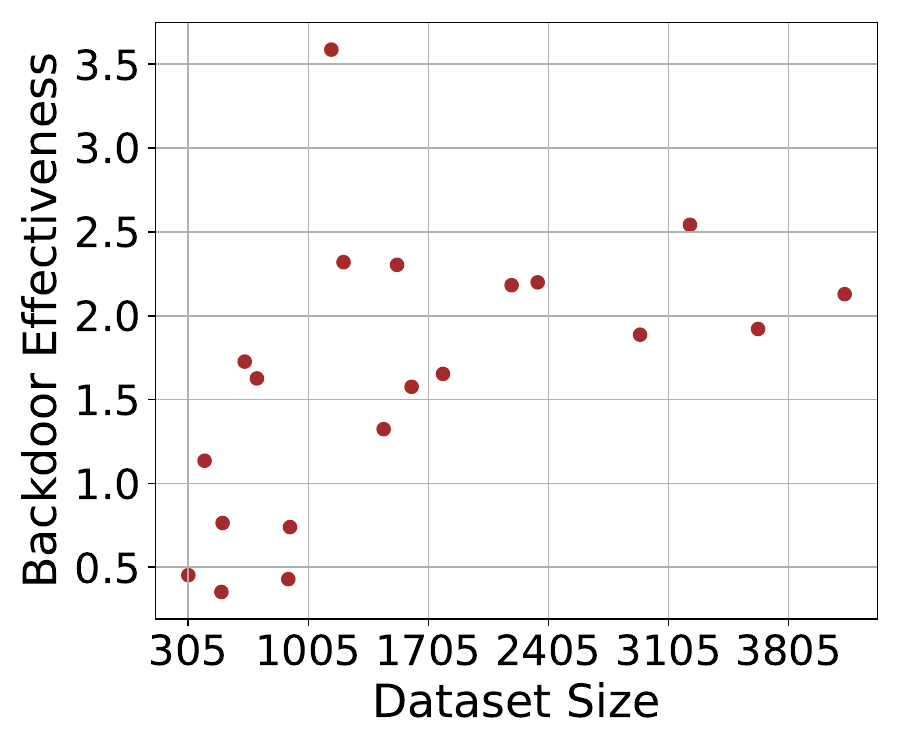}
        \subcaption*{B=8}
    \end{subfigure}

    \caption{The FPR for detecting contamination and the backdoor effectiveness as functions of the dataset size for Mistral-7B under different number of backdoors. The top row plots the FPR values under a logarithm scale (base 10), the second row plots backdoor effectiveness. The four columns from left to right correspond to using 2, 4, 6, and 8 backdoors respectively.} 
    \label{fig:mistral-fpr-size}
\end{figure*}
\begin{figure*}
    \centering

    \begin{subfigure}[t]{0.24\textwidth}
        \centering
        \includegraphics[width=\textwidth]{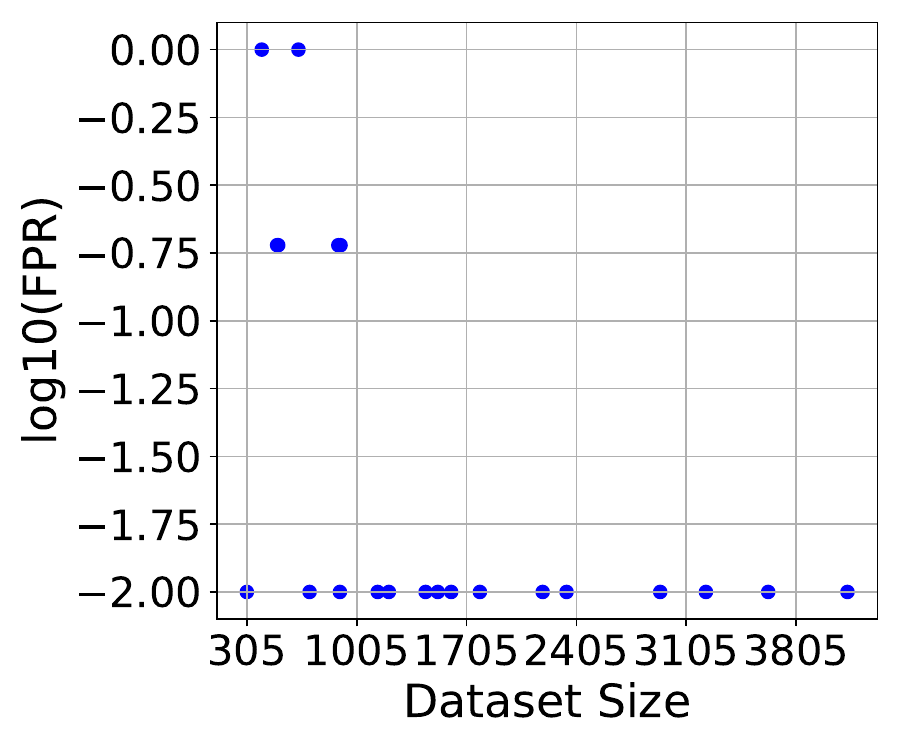}
        \includegraphics[width=\textwidth]{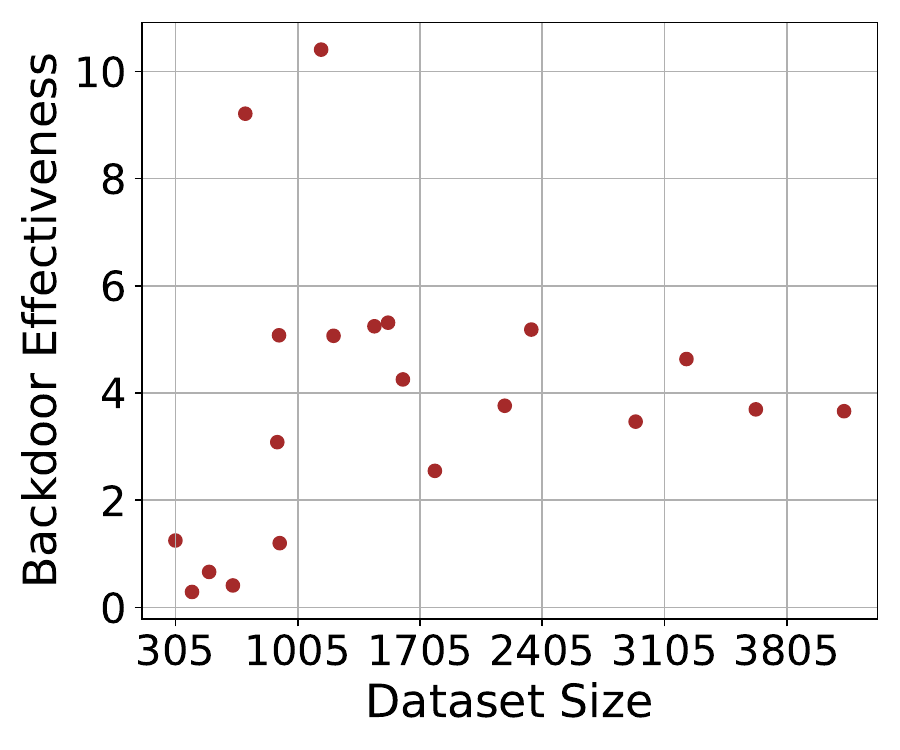}
        \subcaption*{B=2}
    \end{subfigure}
    \begin{subfigure}[t]{0.24\textwidth}
        \centering
        \includegraphics[width=\textwidth]{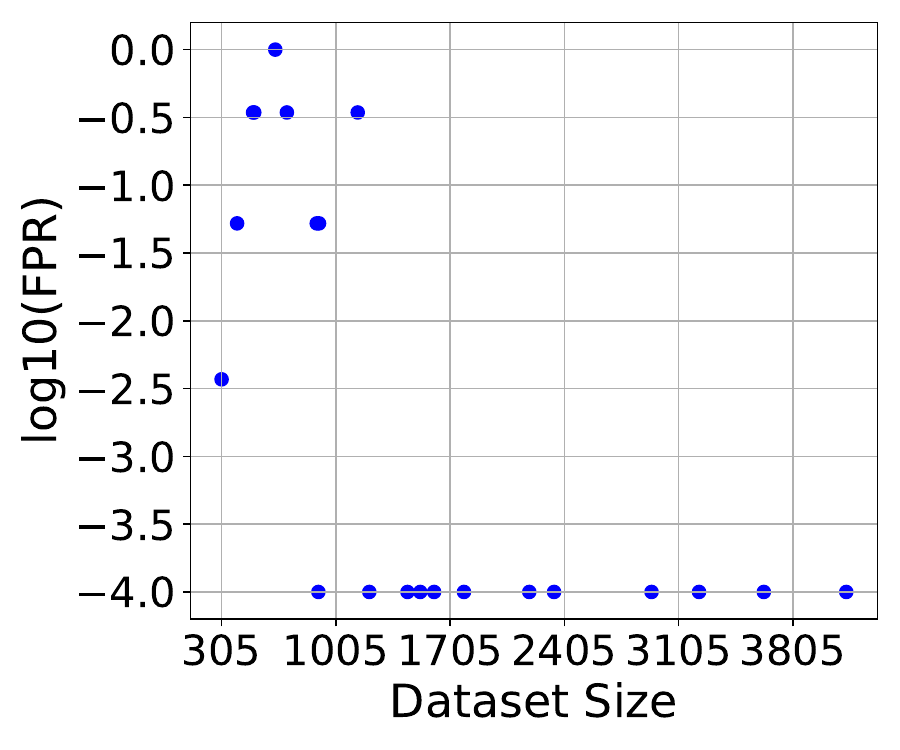}
        \includegraphics[width=\textwidth]{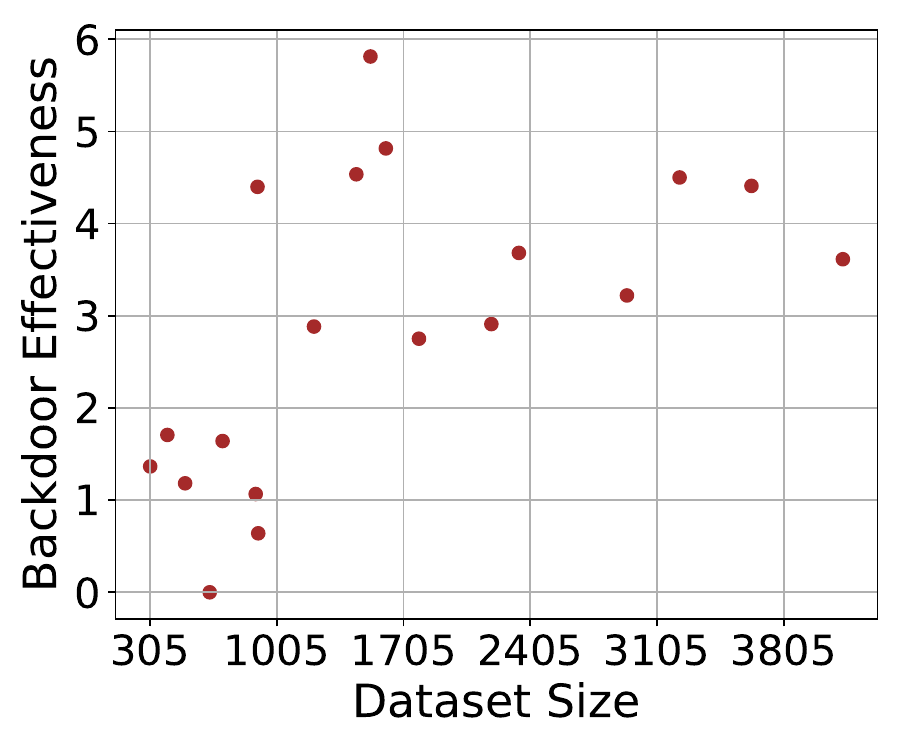}
        \subcaption*{B=4}
    \end{subfigure}
    \begin{subfigure}[t]{0.24\textwidth}
        \centering
        \includegraphics[width=\textwidth]{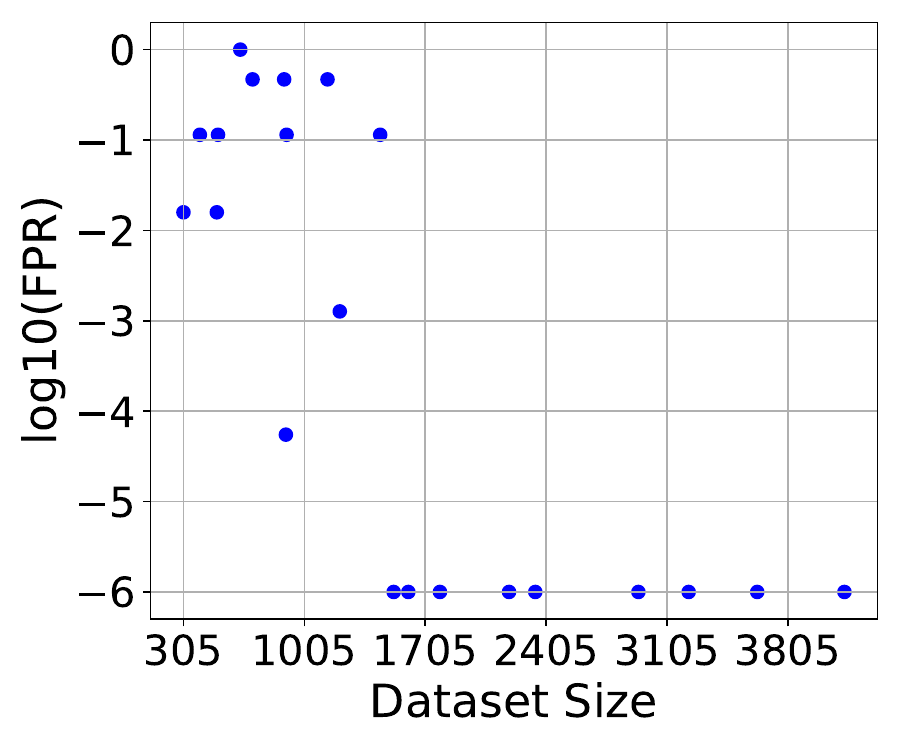}
        \includegraphics[width=\textwidth]{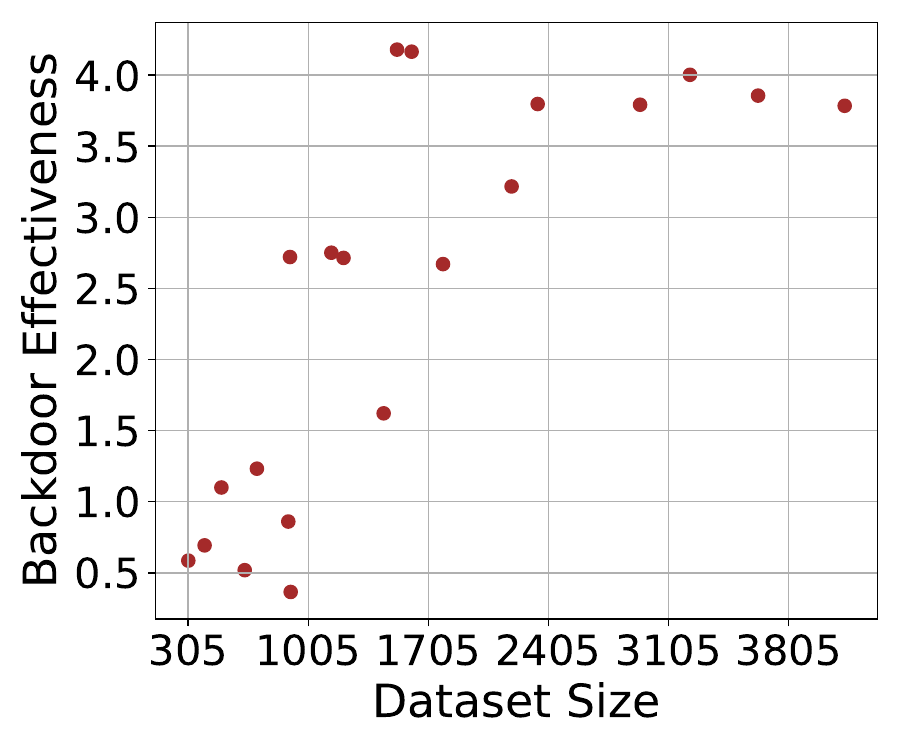}
        \subcaption*{B=6}
    \end{subfigure}
    \begin{subfigure}[t]{0.24\textwidth}
        \centering
        \includegraphics[width=\textwidth]{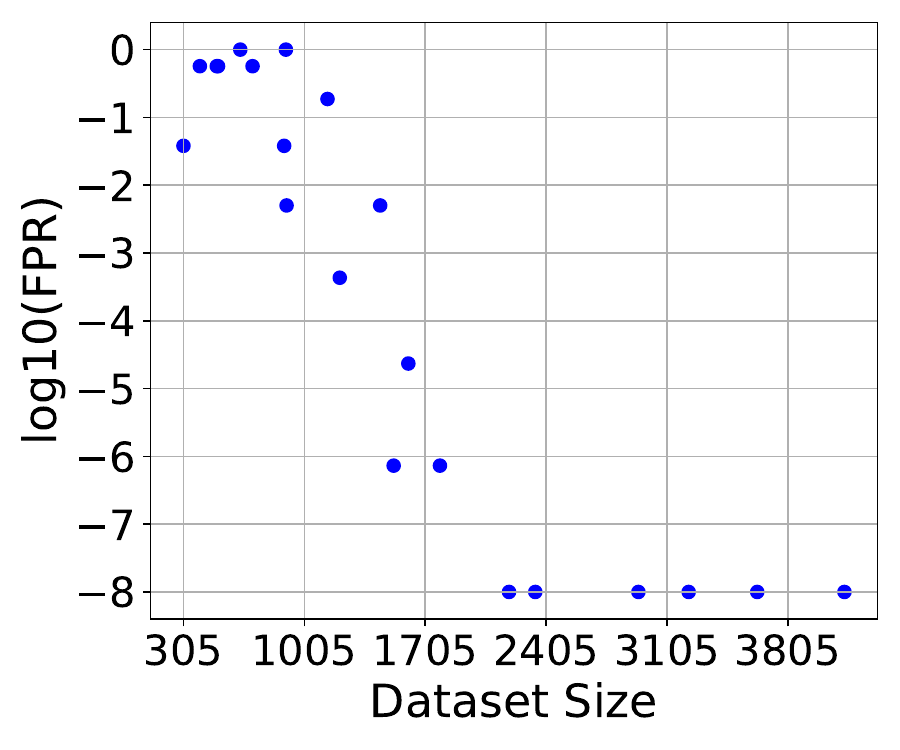}
        \includegraphics[width=\textwidth]{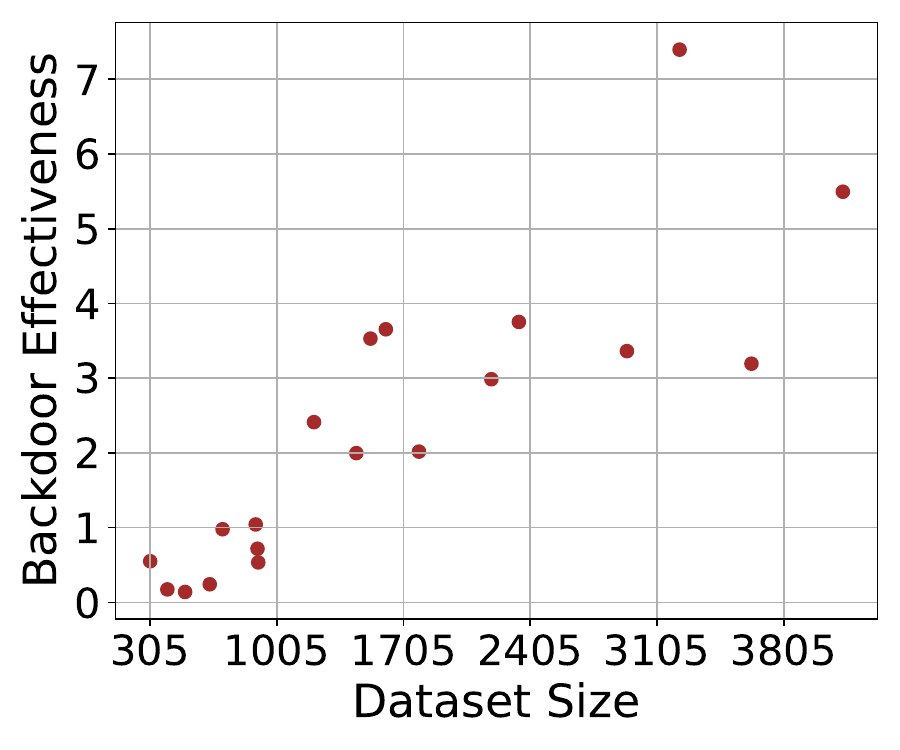}
        \subcaption*{B=8}
    \end{subfigure}

    \caption{The FPR for detecting contamination and the backdoor effectiveness as functions of the dataset size for Gemma-7B under different number of backdoors. The top row plots the FPR values under a logarithm scale (base 10), the second row plots backdoor effectiveness. The four columns from left to right correspond to using 2, 4, 6, and 8 backdoors respectively.} 
    \label{fig:gemma-fpr-size}
\end{figure*}

\section{Will Mixing Test Data with Backdoor Samples Undermine Evaluation Quality?}
\label{apped:eval_quality}

Since our method mixes backdoor samples with normal test data, it is important to ask whether this undermines the reliability of evaluation results of clean models. We argue that the effect is negligible, both in theory and in practice.

First, consider how clean models behave on backdoor samples. During test set preparation, as described in Section~\ref{subsec:test_set_prep}, the backdoor targets are randomly sampled from a uniform distribution $T_i \sim \text{Uniform}(1,K)$. Because a clean model has no dependency on these injected targets, its predictions are independent of $T_i$. Formally,
\[
T_i \mid f \overset{d}{=} T_i \sim \text{Uniform}(1,K),
\]
where $\overset{d}{=}$ denotes equality in distribution (The same conclusion was used in our proof of Theorem~\ref{thm: binomial}). This implies that clean models effectively guess on the injected samples, achieving an expected accuracy of $1/K$. As a consequence, no clean model gains a systematic advantage or disadvantage from the presence of the backdoor samples. This theoretical result is confirmed empirically in Appendix~\ref{append:main-result-score} (Table~\ref{tab:main-score}): for MMLU-Pro with $K=10$, most clean models achieve about $10\%$ accuracy on backdoor samples, while for Big-Bench-Hard with $K=7$, the accuracy fluctuates around $14.3\%$.

Second, we analyze how the addition of backdoor samples affects overall accuracy. Let $N$ denote the number of clean samples, $n_c$ the number of correct predictions on them, and $n_b$ the number of correct predictions on backdoor samples (using the backdoor targets as ground truth). Define a slightly modified version of poison rate\footnote{This differs slightly from the poison rate definition used elsewhere in our paper but simplifies the math without affecting conclusions.} as:
\[
p = \frac{\#\text{backdoor samples}}{\#\text{clean samples}}.
\]
The clean accuracy is $A_c = \tfrac{n_c}{N}$, while the combined accuracy is
\[
A_b = \frac{n_c + n_b}{(1+p)N}.
\]
Since $\mathbb{E}[n_b] = \tfrac{pN}{K}$, the relative difference between $A_b$ and $A_c$ is
\[
\epsilon = \frac{A_b - A_c}{A_c},
\]
Its expectation is\[
\mathbb{E}[\epsilon] = \Big(\tfrac{N/K}{n_c} - 1\Big)\cdot \tfrac{p}{1+p}.
\]
For any model performing better than random guess on clean data, the prefactor $\Big(\tfrac{N/K}{n_c} - 1\Big)$ lies strictly between $-1$ and $0$, which means that the accuracy distortion decreases on the order of $1/p$. And since the poison rate needed is rather small (as low as 2.2\% for our setup in Appendix~\ref{append:mix-data}, meaning we do not need to include too many backdoor samples), the relative error is negligible. In practice, the minimum poison rate required for backdoors to be effective depends on external factors outside of the DyePack framework—e.g., attack design, trigger strength, training hyperparameters. We clarify that our objective is not to propose a stronger backdoor attack method, but to theoretically and empirically demonstrate the effectiveness of repurposing backdoors for contamination detection while providing computable and bounded FPRs.

We also validate the stability of model ranking empirically. Table~\ref{tab:main-score} shows that across both datasets and 8 different values of $B$, the relative ranking of models remains unchanged before and after adding backdoor samples. For example, on MMLU-Pro with $B=8$, five models maintain exactly the same order despite small drops in raw accuracy. Across $100$ head-to-head model comparisons (two datasets, five values of $B$, and ten pairwise model combinations), the minimum injection rate required to flip any ranking is approximately $28.1\%$, which is far larger than the rates we used.

Moreover, in practice, when it comes to the need of strictly verifying the quality and trustworthiness of becnhmark evaluation, the more reliable and accepted approach is to use evaluator-run leaderboards (e.g., Open LLM Leaderboard~\cite{open-llm-leaderboard}, BFCL~\cite{patilberkeley}, LM Arena~\cite{lmarena_leaderboard}), rather than self-reported results (e.g., company blog posts). Since leaderboard owners run the evaluation, they know which samples are clean or backdoored, and can report accurate clean accuracy directly, which completely avoids any accuracy distortions caused by backdoor samples.

\end{document}